\newtheorem{theorem}{Theorem}
\newtheorem{lemma}[theorem]{Lemma}
\DeclareMathOperator{\Tr}{Tr}
\DeclareMathOperator*{\argmin}{arg\,min}
\newcommand{\matr}[1]{\bm{#1}}
\DeclareRobustCommand*\circled[1]{\tikz[baseline=(char.base)]{\node[shape=circle,draw,inner sep=0.5pt] (char) {#1};}}
\begin{document}
	%
	% paper title
	% Titles are generally capitalized except for words such as a, an, and, as,
	% at, but, by, for, in, nor, of, on, or, the, to and up, which are usually
	% not capitalized unless they are the first or last word of the title.
	% Linebreaks \\ can be used within to get better formatting as desired.
	% Do not put math or special symbols in the title.
	\title{Relative Pairwise Relationship Constrained \\ Non-negative Matrix Factorisation}
	%
	%
	% author names and IEEE memberships
	% note positions of commas and nonbreaking spaces ( ~ ) LaTeX will not break
	% a structure at a ~ so this keeps an author's name from being broken across
	% two lines.
	% use \thanks{} to gain access to the first footnote area
	% a separate \thanks must be used for each paragraph as LaTeX2e's \thanks
	% was not built to handle multiple paragraphs
	%
	%
	%\IEEEcompsocitemizethanks is a special \thanks that produces the bulleted
	% lists the Computer Society journals use for "first footnote" author
	% affiliations. Use \IEEEcompsocthanksitem which works much like \item
	% for each affiliation group. When not in compsoc mode,
	% \IEEEcompsocitemizethanks becomes like \thanks and
	% \IEEEcompsocthanksitem becomes a line break with idention. This
	% facilitates dual compilation, although admittedly the differences in the
	% desired content of \author between the different types of papers makes a
	% one-size-fits-all approach a daunting prospect. For instance, compsoc 
	% journal papers have the author affiliations above the "Manuscript
	% received ..."  text while in non-compsoc journals this is reversed. Sigh.
	
	\author{Shuai Jiang, 
		Kan Li, 
		and Richard Yida Xu
		\IEEEcompsocitemizethanks{
			\IEEEcompsocthanksitem S. Jiang is with the School of Computer Science, Beijing Institute of Technology (BIT), Beijing, China, and the Faculty of Engineering and Information Technology, University of Technology Sydney (UTS), Australia (email: jiangshuai@bit.edu.cn; shuai.jiang-1@student.uts.edu.au).
			\IEEEcompsocthanksitem K. Li is with the School of Computer Science, Beijing Institute of Technology (BIT), Beijing, China (email: likan@bit.edu.cn).
			\IEEEcompsocthanksitem R. Y. D. Xu is with the Faculty of Engineering and Information Technology, University of Technology Sydney (UTS), Australia (email: yida.xu@uts.edu.au).
	}}
	
	% note the % following the last \IEEEmembership and also \thanks - 
	% these prevent an unwanted space from occurring between the last author name
	% and the end of the author line. i.e., if you had this:
	% 
	% \author{....lastname \thanks{...} \thanks{...} }
	%                     ^------------^------------^----Do not want these spaces!
	%
	% a space would be appended to the last name and could cause every name on that
	% line to be shifted left slightly. This is one of those "LaTeX things". For
	% instance, "\textbf{A} \textbf{B}" will typeset as "A B" not "AB". To get
	% "AB" then you have to do: "\textbf{A}\textbf{B}"
	% \thanks is no different in this regard, so shield the last } of each \thanks
	% that ends a line with a % and do not let a space in before the next \thanks.
	% Spaces after \IEEEmembership other than the last one are OK (and needed) as
	% you are supposed to have spaces between the names. For what it is worth,
	% this is a minor point as most people would not even notice if the said evil
	% space somehow managed to creep in.

	% The paper headers
	\markboth{IEEE TRANSACTIONS ON KNOWLEDGE AND DATA ENGINEERING}%
	{Shell \MakeLowercase{\textit{et al.}}: Bare Demo of IEEEtran.cls for Computer Society Journals}
	% The only time the second header will appear is for the odd numbered pages
	% after the title page when using the twoside option.
	% 
	% *** Note that you probably will NOT want to include the author's ***
	% *** name in the headers of peer review papers.                   ***
	% You can use \ifCLASSOPTIONpeerreview for conditional compilation here if
	% you desire.

	% The publisher's ID mark at the bottom of the page is less important with
	% Computer Society journal papers as those publications place the marks
	% outside of the main text columns and, therefore, unlike regular IEEE
	% journals, the available text space is not reduced by their presence.
	% If you want to put a publisher's ID mark on the page you can do it like
	% this:
	%\IEEEpubid{0000--0000/00\$00.00~\copyright~2015 IEEE}
	% or like this to get the Computer Society new two part style.
	%\IEEEpubid{\makebox[\columnwidth]{\hfill 0000--0000/00/\$00.00~\copyright~2015 IEEE}%
	%\hspace{\columnsep}\makebox[\columnwidth]{Published by the IEEE Computer Society\hfill}}
	% Remember, if you use this you must call \IEEEpubidadjcol in the second
	% column for its text to clear the IEEEpubid mark (Computer Society jorunal
	% papers don't need this extra clearance.)

	% use for special paper notices
	%\IEEEspecialpapernotice{(Invited Paper)}

	% for Computer Society papers, we must declare the abstract and index terms
	% PRIOR to the title within the \IEEEtitleabstractindextext IEEEtran
	% command as these need to go into the title area created by \maketitle.
	% As a general rule, do not put math, special symbols or citations
	% in the abstract or keywords.
	\IEEEtitleabstractindextext{%
		\begin{abstract}
			Non-negative Matrix Factorisation (NMF) has been extensively used in machine learning and data analytics applications. Most existing variations of NMF only consider how each row/column vector of factorised matrices should be shaped, and ignore the relationship among pairwise rows or columns. In many cases, such pairwise relationship enables better factorisation, for example, image clustering and recommender systems. In this paper, we propose an algorithm named, Relative Pairwise Relationship constrained Non-negative Matrix Factorisation (RPR-NMF), which places constraints over relative pairwise distances amongst features by imposing penalties in a triplet form. Two distance measures, squared Euclidean distance and Symmetric divergence, are used, and exponential and hinge loss penalties are adopted for the two measures respectively. It is well known that the so-called ``multiplicative update rules'' result in a much faster convergence than gradient descend for matrix factorisation. However, applying such update rules to RPR-NMF and also proving its convergence is not straightforward. Thus, we use reasonable approximations to relax the complexity brought by the penalties, which are practically verified. Experiments on both synthetic datasets and real datasets demonstrate that our algorithms have advantages on gaining close approximation, satisfying a high proportion of expected constraints, and achieving superior performance compared with other algorithms.
		\end{abstract}
		
		% Note that keywords are not normally used for peerreview papers.
		\begin{IEEEkeywords}
			Non-negative matrix factorisation, multiplicative update rules, clustering, recommender systems.
	\end{IEEEkeywords}}

	% make the title area
	\maketitle

	% To allow for easy dual compilation without having to reenter the
	% abstract/keywords data, the \IEEEtitleabstractindextext text will
	% not be used in maketitle, but will appear (i.e., to be "transported")
	% here as \IEEEdisplaynontitleabstractindextext when the compsoc 
	% or transmag modes are not selected <OR> if conference mode is selected 
	% - because all conference papers position the abstract like regular
	% papers do.
	\IEEEdisplaynontitleabstractindextext
	% \IEEEdisplaynontitleabstractindextext has no effect when using
	% compsoc or transmag under a non-conference mode.

	% For peer review papers, you can put extra information on the cover
	% page as needed:
	% \ifCLASSOPTIONpeerreview
	% \begin{center} \bfseries EDICS Category: 3-BBND \end{center}
	% \fi
	%
	% For peerreview papers, this IEEEtran command inserts a page break and
	% creates the second title. It will be ignored for other modes.
	\IEEEpeerreviewmaketitle

	\IEEEraisesectionheading{\section{Introduction}\label{sec:introduction}}
	% Computer Society journal (but not conference!) papers do something unusual
	% with the very first section heading (almost always called "Introduction").
	% They place it ABOVE the main text! IEEEtran.cls does not automatically do
	% this for you, but you can achieve this effect with the provided
	% \IEEEraisesectionheading{} command. Note the need to keep any \label that
	% is to refer to the section immediately after \section in the above as
	% \IEEEraisesectionheading puts \section within a raised box.

	% The very first letter is a 2 line initial drop letter followed
	% by the rest of the first word in caps (small caps for compsoc).
	% 
	% form to use if the first word consists of a single letter:
	% \IEEEPARstart{A}{demo} file is ....
	% 
	% form to use if you need the single drop letter followed by
	% normal text (unknown if ever used by the IEEE):
	% \IEEEPARstart{A}{}demo file is ....
	% 
	% Some journals put the first two words in caps:
	% \IEEEPARstart{T}{his demo} file is ....
	% 
	% Here we have the typical use of a "T" for an initial drop letter
	% and "HIS" in caps to complete the first word.
	\IEEEPARstart{C}{ompared} to conventional dimensionality reduction methods, such as Singular Value Decomposition (SVD), low rank Non-negative Matrix Factorisation (NMF), mostly solving an optimisation task, converges much faster when it comes down to large real-world data sets \cite{LeeNature1999,BrunetPNAS2004,DingPAMI2010}. Thus NMF has been widely used in many applications \cite{YangWSDM2013,MohammadihaTASLP2013}, and algorithms of this kind have been the research foci in many communities, such as image processing and recommender systems \cite{KorenC2009,EsserTIP2012,MohammadihaTASLP2013,LiuSDM2013,KimSIGKDD2015}.
	
	A seminal approach in NMF is the so-called ``multiplicative update rules'' which guarantees both the convergence of the algorithm and the non-negativity of factorised matrices \cite{LeeNIPS2001}. Though the ``multiplicative update rules'' were proved not converging to a stationary point numerically \cite{Gonzalez2005DCAM}, and they are not strictly well-defined because of possible zero entries \cite{LinNC2007}, it practically produces satisfactory results, especially for large scale data, which makes it a popular solution for NMF. However, the original NMF only imposes the non-negativity constraints on both of the factorising and factorised matrices, which in practice may not be enough to satisfy additional requirements. Thus, researchers in this area have been proposing new algorithms under this framework to cater for incremental improvements, variations, and/or application oriented constraints \cite{HoyerJMLR2004,DingSDM2005,PascualPAMI2006,OzerovTASLP2010,SandlerPAMI2011,KimuraML2016}.
	
	A main sub category of NMF is Constrained Non-negative Matrix Factorisation (CNMF), which imposes constraints based on variables as regularisation terms \cite{PaucaLAA2006}. The most commonly used regularisations for NMF are L1 norm and L2 norm, the former increases the sparseness of the factorised matrices, while the latter makes the results smooth to prevent overfitting. However, these constraints are only imposed on each of the rows or columns and has not considered the relationship among pairwise rows or columns. Such pairwise relationships exists wildly in many systems, especially in those systems where the factorised matrices are representing features. A typical instance is the matrix factorisation technique used in recommender systems.
	
	In a recommender system, the factorising matrix is usually the rating matrix whose entries denote the ratings given by the corresponding user (row) to the corresponding item (column), and the factorised matrices are usually regarded as the user feature matrix (the left factorised matrix) and the item feature matrix (the right factorised matrix). Fig. \ref{movie} shows an example of a simple movie recommender system. In this example, after factorisation by NMF, the distance between ``Star Wars'' and ``Titanic'' becomes less than the distance between ``Star Wars'' and ``Star Trek''. However, as we all know, the movie ``Star Wars'' should be closer to the movie ``Star Trek'' -- both are within the scientific fiction genre, unlike ``Titanic'', which is a love story. Thus intuitively, it will result in a better factorisation and make better recommendations if we can incorporate such human-aware relative relationships into the NMF model.
	
	\begin{figure*}[htbp]
		\begin{center}
			\includegraphics[width=1\textwidth]{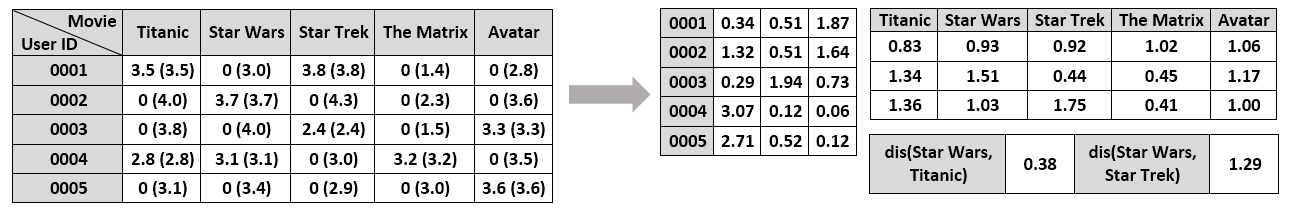}
			\caption{\small{A simple movie recommender system. The factorisation algorithm used here is NMF with Euclidean measure proposed in \cite{LeeNIPS2001}. Original missing ratings are denoted by $0$ and recovered ratings are showed in brackets. As shown under the right factorised matrix, the distance between feature vectors of movies ``Star Wars'' and ``Titanic'' is less than the distance between movies ``Star Wars'' and ``Star Trek''.}\label{movie}}
		\end{center}
	\end{figure*}
	
	There have been a few studies that attempt to consider the above pairwise relationship in the area of matrix factorisation. Although none of these have properly addressed our problem, two existing methods are still worth noting here: one is Graph Regularised Non-negative Matrix Factorisation (GNMF) \cite{CaiPAMI2011}, the other is Label Constrained Non-negative Matrix Factorisation (LCNMF) \cite{LiuPAMI2012}.
	
	GNMF constructs a weight matrix of the graph from the observed data, and then applies the weights (similarities) on the factorised low-dimensional data representation as regularisations. It is designed as a dimensionality reduction method, and it works well on image clustering applications where the data points in different classes are distinctively different. However, in many other cases, such as where the data points are not spread and where the matrix factorisation is not used to reduce dimensionality (like in recommender systems), GNMF cannot guarantee the relative relationship denoted by similarities retained as expected after factorisation. Besides, the setting of similarities is sensitive to the factorisation results, especially when the similarities cannot be simply set zeros and ones, such as when there exists chain constraints.
	
	LCNMF was proposed to cater for scenarios where partially labeled grouping data was made available. Its key idea is, if two feature vectors are labeled into the same class, they are assumed to have the same feature representation in the latent space. This approach has addressed the need of applications on image clustering, however, such a setting is far too restrictive in general: ``Star Wars'' and ``Star Trek'' could be very similar to each other, but setting their features identical is unacceptable and impractical.
	
	In this paper, we propose a novel matrix factorisation algorithm, called RPR-NMF. Rather than using explicit similarities or previously known labels, RPR-NMF imposes penalties for relative pairwise relationships (RPRs) in a triplet form. The penalties are not limited to be within $[0,1]$ as for similarities or to be binary values as for labels. Both of the squared Euclidean distance and the symmetric divergence measure are used in the objective of RPR-NMF, and the penalties are in exponential and hinge loss forms respectively. The update rules for RPR-NMF conform to the well-known ``multiplicative update rules'' in which the proofs of convergence are essential for the whole algorithm. Due to the complexity of proof brought by the imposed penalties, we approximate partial terms in the proof part and have verified its practical benefits through numerous experiments. 
	
	Compared with the existing methods GNMF and LCNMF, RPR-NMF can guarantee more pairwise relationships retrained after factorisation. Fig. \ref{eg} gives a demonstration of the RPRs among four data points after running GNMF, LCNMF, and our proposed algorithm RPR-NMF using Euclidean measure respectively. In this example, GNMF failed on retaining one RPR (points \circled{$2$} and \circled{$4$} should be closer than points \circled{$3$} and \circled{$4$}), LCNMF projected all points onto one, while RPR-NMF retains all RPRs after factorisation.
	
	\begin{figure*}[htbp]
		\begin{center}
			\includegraphics[width=0.96\textwidth]{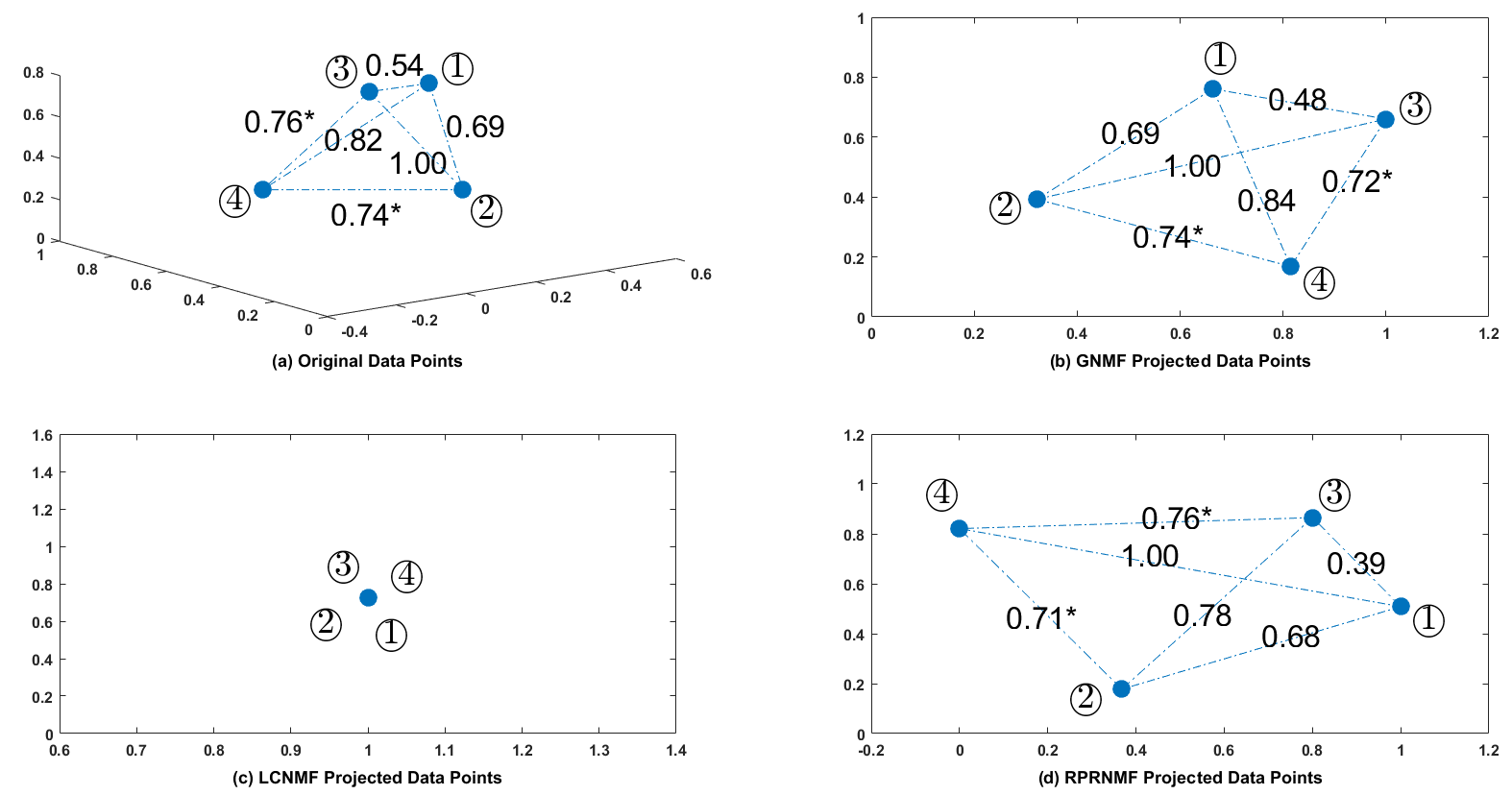}
			\caption{An example of projecting four 3D data points into 2D space. Points are ordered by circled numbers, and the value near each dotted line is the normalised Euclidean distance. (a) The original data space. The distance between point \circled{$2$} and point \circled{$4$} is less than the distance between point \circled{$3$} and point \circled{$4$} (marked by asterisk sign). (b) Projected data points by GNMF using Euclidean distance between each pair of original data points as dissimilarity matrix. The distance between point \circled{$2$} and point \circled{$4$} becomes bigger than the distance between point \circled{$3$} and point \circled{$4$}. (c) LCNMF projects all four points onto one when considering all RPRs. (d) RPR-NMF retains all the RPRs after factorisation.}\label{eg}
		\end{center}
	\end{figure*}
	
	The main contributions of this paper are:
	
	1. We propose a novel algorithm named RPR-NMF, which utilises relative pairwise relationship among rows or columns of factorised matrices to achieve a better factorisation with high constraint satisfied rate and close approximation simultaneously. Different from GNMF and LCNMF, RPR-NMF can guarantee the expected RPRs retained after factorisation and does not limit the feature vectors to be identical.
	
	2. In the part of method for RPR-NMF, we use different forms of penalties for Euclidean measure and Divergence measure based on our observations on method convergence through numerous experiments. For the Euclidean measure, we incorporate the RPRs in natural exponential functions, while for the Divergence measure, we use hinge loss function.
	
	3. The solution of RPR-NMF conforms to that of ``multiplicative rules'', in which we provide complete and sufficient proofs for both of the distance measures with the help of relaxation on partial terms. Such relaxation has been verified reasonable and practical through our experiments.
	
	4. The complexity analysis shows that RPR-NMF does not increase much of processing time by introducing penalty terms when comparing to NMF. Synthetic and real datasets experiments both demonstrate that RPR-NMF have advantages on close approximation, high constraint satisfied rate and outstanding application performance.
	
	The rest of this paper is organised as follows: in the next section, we review related literature to our work. The Method section contains the details of our algorithm and the proof of convergence, followed by the Experiments section in which we evaluate our algorithm as well as the Conclusions.
	
	\section{Related Work}
	
	NMF was first proposed to solve the following optimisation problem: given a non-negative matrix $\matr{V}$, find non-negative matrix factors $\matr{W}$ and $\matr{H}$ such that $\matr{V} \approx \matr{WH}$. Since the seminal work of \cite{LeeNIPS2001} which has proposed the so-called, ``multiplicative update rules'' for non-negative matrix factorisation, a number of approaches followed suit dealing with various NMF issues from different aspects. Our work falls under the category of the Constrained Non-negative Matrix Factorisation (CNMF) which was first proposed in \cite{PaucaLAA2006}. In general, it has the following representation: 
	\begin{equation}
	\label{cnmf}
	\min_{\matr{W},\matr{H}}\{ \lVert \matr{V} - \matr{W}\matr{H} \rVert^{2}_{F} + \alpha J_{1}(\matr{W}) + \beta J_{2}(\matr{H})\},
	\end{equation}
	for $\matr{W} \geqslant 0$ and $\matr{H} \geqslant 0$, where $\lVert \cdot \rVert_{F}$ is the Frobenius norm, $\alpha$ and $\beta$ are regularisation coefficients. The functions of $J_{1}(\matr{W})$ and $J_{2}(\matr{H})$ are penalty terms used to enforce certain constraints on the solution of Eq.(\ref{cnmf}). In their work, the penalties are set $J_{1}(\matr{W}) = \lVert \matr{W} \rVert_{F}^{2}$ and $J_{2}(\matr{H}) = \lVert \matr{H} \rVert_{F}^{2}$ in order to enforce the smoothness in $\matr{W}$ and $\matr{H}$ respectively.
	
	Many studies followed the above CNMF framework. \cite{JiaGRS2009} used an adaptive potential function as penalties to characterise the piecewise smoothness of spectral data. \cite{LiCVPR2001} imposed three additional constraints on the NMF basis to reveal local features. \cite{ShenAAAI2010} imposed both L1 and L2 norms to control the shape of base matrix and increase the sparseness of the coefficient matrix so as to enhance the clustering performance on multiple manifolds.
	
	Besides the above studies based on CNMF that consider how to constrain the value of each vector in factorised matrices, there are two algorithms that take the relationship among vectors of factorised matrix into consideration: Graph Regularised Non-negative Matrix Factorisation (GNMF) proposed by \cite{CaiPAMI2011}, and Labelled Constrained Non-negative Matrix Factorisation (LCNMF) proposed by \cite{LiuPAMI2012}.
	
	GNMF, also a CNMF algorithm, is to utilise the relationship among factorised rows or columns for a dimensionality reduction issue. It is in an effort to ensure that similarities between data points at the original space are also retained after they are transformed to the low dimensional subspace through factorisation. Its objective function with Euclidean measure is as following:
	\begin{equation}
	\mathscr{F} = \lVert \matr{V} - \matr{WH} \rVert^{2}_{F} + \lambda \Tr(\matr{H}\matr{L}\matr{H}^{T}),
	\end{equation}
	where $\matr{L}$ is a graph Laplacian matrix obtained from the similarity matrix and $\lambda$ is a regularisation coefficient. As showed in the objective function, GNMF tries to minimise two parts simultaneously: the squared errors between the product of factorised matrices and the factorising matrix, and the similarity matrix. The ideal solution is to minimise them at the same time. However, it often happens that if GNMF minimised the squared error, the RPRs implied by similarities might not be satisfied. As shown in Fig. \ref{eg}, the RPR among points \circled{$2$}, \circled{$3$} and \circled{$4$} was opposite to that when they were in the original 3D space. Our proposed RPR-NMF works in a different way: it imposes penalties with respect to the expected RPRs, which forces the factorised feature vectors to keep as many of the expected RPRs as possible.
	
	LCNMF also utilises the relationship among factorised vectors. Instead of imposing regularised constraints, it represents the relationship by altering the factorising structure. Thus it is not a CNMF method. LCNMF uses partial label information as hard constraints and turns the original NMF task into a semi-supervised problem: they represent the right factorised matrix by a product of a class matrix and a reduced feature matrix where the class matrix contains binary entries to divide data into a predefined number of classes. Its objective function with Euclidean measure is as:
	\begin{equation}
	\mathscr{F} = \lVert \matr{V} - \matr{W}\matr{Y}\matr{B} \rVert^{2}_{F},
	\end{equation}
	where $\matr{Y}$ is the reduced feature matrix and $\matr{B}$ is the class matrix. The method however, assumes that if two data points have the same label, their corresponding feature vectors must be identical, as showed in Fig. \ref{eg} where all four data points are labelled the same and are projected onto one point after factorisation. Such constraints are too restrictive under many general settings. For example, if two movies are by the same director, in the same genre and even feature the same actors, setting their features identical is to ignore any difference between them, which is what our method aims to mitigate.
	
	\section{Method}
	
	In this section, we introduce a new factorisation algorithm when RPR constraints are in place, called RPR-NMF. Note that both GNMF and LCNMF only impose constraints on the right factorised matrix because they were proposed as data dimensionality reduction methods. For generality, RPR-NMF imposes constraints on both factorised matrices, and it is trivial to only impose constraints on one factorised matrix.
	
	Consider a dataset represented by a non-negative $N \times M$ matrix $\matr{V}$. This matrix is then approximately factorised into an $N \times K$ matrix $\matr{W}$ and a $K \times M$ matrix $\matr{H}$, where $K$ is usually set to be smaller than both $N$ and $M$, and is commonly referred to as the latent dimension. The RPR constraints placed on the factorised matrices can be defined as two sets of integer indexed triples:
	\begin{eqnarray}
	\matr{L}_{\matr{W}} \subseteq \{(q,r,s) | q,r,s \in \mathbb{N}^{+}, q,r,s \leqslant n, q \neq r \neq s\}, \\
	\matr{L}_{\matr{H}} \subseteq \{(q,r,s) | q,r,s \in \mathbb{N}^{+}, q,r,s \leqslant m, q \neq r \neq s\}.
	\end{eqnarray}
	
	Specifically, each triple represents the relative relationship between two pairs of vectors with one sharing vector. In our work, if $\matr{W}$ was the matrix in question and $l^{\text{th}}$ triple specified the distance between vector $q$ and $r$ to be less than the distance between vectors $q$ and $s$, the relationship could be denoted as $dis(\matr{W}_{q^{l}:},\matr{W}_{r^{l}:}) < dis(\matr{W}_{q^{l}:},\matr{W}_{s^{l}:})$ where $\matr{W}_{q:}$ is the $q^{\text{th}}$ row vector of matrix $\matr{W}$, and $dis(\bm{x},\bm{y})$ measures the distance between vectors $\bm{x}$ and $\bm{y}$. We follow the most commonly used two distance measures in our work, which are the squared Euclidean distance
	\begin{equation}
	E(\bm{x},\bm{y}) = \lVert \bm{x} - \bm{y} \rVert^{2},
	\end{equation}
	and the Divergence (when the variables are two unit vectors/distributions, it becomes KL-Divergence)
	\begin{equation}
	D(\bm{x}||\bm{y}) = \sum_{i = 1}^{K} \bm{x}_{i} \log\frac{\bm{x}_{i}}{\bm{y}_{i}} - \bm{x}_{i} + \bm{y}_{i}.
	\end{equation}
	Since the Divergence of two vectors is not symmetric ($D(\bm{x}||\bm{y}) \neq D(\bm{y}||\bm{x})$), when characterising the imposed RPR constraints, we use the Symmetric Divergence defined as
	\begin{equation}
	\begin{aligned}
	SD(\bm{x}, \bm{y}) & = \frac{1}{2}(D(\bm{x} || \bm{y}) + D(\bm{y} || \bm{x})) \\
	& = \frac{1}{2}\sum_{i=1}^{K} (\bm{x}_{i} - \bm{y}_{i})\log\frac{\bm{x}_{i}}{\bm{y}_{i}}.
	\end{aligned}
	\end{equation}
	
	Then the constraints are incorporated as penalty terms in the objective function. In our work, the penalty format for Euclidean measure is of an addition of natural exponential functions, while that for Divergence measure is of a hinge loss function. The reason for not using the same format of penalties is that the Divergence measure with exponential penalties cannot guarantee a high proportion of satisfied constrains after factorisation, and that the Euclidean measure with hinge loss penalties cannot steadily converge. Thus with two independent coefficients $\lambda_{\matr{W}}$ and $\lambda_{\matr{H}}$, we define the objective function using Euclidean distance as
	\begin{equation}
	\label{obj1}
	\begin{aligned}
	& \mathscr{F}_{1} = \lVert \matr{V} - \matr{W}\matr{H} \rVert_{F}^{2} \\
	&\; + \lambda_{\matr{W}} \sum_{l = 1}^{l_{\matr{W}}} [\exp( E(\matr{W}_{q^{l}:},\matr{W}_{r^{l}:})) + \exp(-E(\matr{W}_{q^{l}:}, \matr{W}_{s^{l}:}))] \\
	&\; + \lambda_{\matr{H}} \sum_{l = 1}^{l_{\matr{H}}} [\exp( E(\matr{H}_{:q^{l}},\matr{H}_{:r^{l}})) + \exp(-E(\matr{H}_{:q^{l}}, \matr{H}_{:s^{l}}))]\\
	&\; s.t. \lambda_{\matr{W}} \geqslant 0, \lambda_{\matr{H}} \geqslant 0, \forall i,j, \matr{W}_{ij} \geqslant 0, \matr{H}_{ij} \geqslant 0,
	\end{aligned}
	\end{equation}
	and the objective function using Divergence as
	\begin{equation}
	\label{obj2}
	\begin{aligned}
	& \mathscr{F}_{2} = D(\matr{V} || \matr{W}\matr{H})\\
	&\; + \lambda_{\matr{W}} \sum_{l = 1}^{l_{\matr{W}}} \max(0, SD(\matr{W}_{q^{l}:},\matr{W}_{r^{l}:}) - SD(\matr{W}_{q^{l}:}, \matr{W}_{s^{l}:})) \\
	&\; + \lambda_{\matr{H}} \sum_{l = 1}^{l_{\matr{H}}} \max(0, SD(\matr{H}_{:q^{l}},\matr{H}_{:r^{l}}) - SD(\matr{H}_{:q^{l}}, \matr{H}_{:s^{l}})) \\
	&\; s.t. \lambda_{\matr{W}} \geqslant 0, \lambda_{\matr{H}} \geqslant 0, \forall i,j, \matr{W}_{ij} \geqslant 0, \matr{H}_{ij} \geqslant 0,
	\end{aligned}
	\end{equation}
	where $l_{\matr{W}}$ and $l_{\matr{H}}$ are the numbers of constraints.
	
	\subsection{Solving Objective Functions}
	To solve the above objective functions, we need to derive the update rules for $\matr{W}$ and $\matr{H}$. As a matter of fact, the penalties are not convex even when fixing one of the matrix factors. However, we found it is still feasible to obtain the update rules by constructing and solving the corresponding Lagrange functions. Once we obtained the updating rules, we could minimise the objective functions by iteratively updating $\matr{W}$ and $\matr{H}$.
	
	\subsubsection{Updating rules for Euclidean measure}
	As for the objective function in Eq.(\ref{obj1}), we first construct a Lagrange function with non-negative constraints $\matr{W}_{ij} \geqslant 0$ and $\matr{H}_{ij} \geqslant 0$:
	\begin{equation}
	\begin{aligned}
	& \mathscr{L}_{1} = \lVert \matr{V} - \matr{W}\matr{H} \rVert_{F}^{2} + \sum_{ij} \alpha_{ij}\matr{W}_{ij} + \sum_{ij} \beta_{ij}\matr{H}_{ij} \\\
	&\; + \lambda_{\matr{W}} \sum_{l = 1}^{l_{\matr{W}}} [\exp( E(\matr{W}_{q^{l}:},\matr{W}_{r^{l}:})) + \exp(-E(\matr{W}_{q^{l}:}, \matr{W}_{s^{l}:}))] \\
	&\; + \lambda_{\matr{H}} \sum_{l = 1}^{l_{\matr{H}}} [\exp( E(\matr{H}_{:q^{l}},\matr{H}_{:r^{l}})) + \exp(-E(\matr{H}_{:q^{l}}, \matr{H}_{:s^{l}}))].
	\end{aligned}
	\end{equation}
	The partial derivative with respect to $\matr{W}_{ab}$ is:
	\begin{equation}
	\begin{aligned}
	\frac{\partial\mathscr{L}_{1}}{\partial\matr{W}_{ab}} &= 2[-(\matr{V}\matr{H}^{T})_{ab} + (\matr{W}\matr{H}\matr{H}^{T})_{ab} \\
	&\quad + \lambda_{\matr{W}} C_{row}(\matr{W}_{ab})] + \alpha_{ab},
	\end{aligned}
	\end{equation}
	where
	\begin{equation}
	\label{CW}
	\begin{small}
	\begin{aligned}
	& C_{row}(\matr{W}_{ab}) = \sum_{l = 1}^{l_{\matr{W}}} \bigg\{\\
	& \exp(E(\matr{W}_{q^{l}:},\matr{W}_{r^{l}:}))[ \sum_{q^{l} = a} (\matr{W}_{q^{l}b} - \matr{W}_{r^{l}b}) + \sum_{r^{l} = a} (\matr{W}_{r^{l}b} - \matr{W}_{q^{l}b})] -\\
	& \exp(-E(\matr{W}_{q^{l}:}, \matr{W}_{s^{l}:}))[\sum_{q^{l} = a} (\matr{W}_{q^{l}b} - \matr{W}_{s^{l}b}) + \sum_{s^{l} = a} (\matr{W}_{s^{l}b} - \matr{W}_{q^{l}b})] \bigg\}\\
	& = \sum_{l = 1}^{l_{\matr{W}}}\bigg(\exp(E(\matr{W}_{q^{l}:},\matr{W}_{r^{l}:})) (\sum_{q^{l} = a} \matr{W}_{q^{l}b} + \sum_{r^{l} = a} \matr{W}_{r^{l}b}) + \\
	& \exp(-E(\matr{W}_{q^{l}:}, \matr{W}_{s^{l}:}))(\sum_{q^{l} = a} \matr{W}_{s^{l}b} + \sum_{s^{l} = a} \matr{W}_{q^{l}b})\bigg) - \\
	& \sum_{l = 1}^{l_{\matr{W}}}\bigg(\exp(E(\matr{W}_{q^{l}:},\matr{W}_{r^{l}:}))(\sum_{q^{l} = a} \matr{W}_{r^{l}b} + \sum_{r^{l} = a} \matr{W}_{q^{l}b}) + \\
	& \exp(-E(\matr{W}_{q^{l}:}, \matr{W}_{s^{l}:}))(\sum_{q^{l} = a} \matr{W}_{q^{l}b} + \sum_{s^{l} = a} \matr{W}_{s^{l}b})\bigg) \\
	& = C_{row}^{+}(\matr{W}_{ab}) - C_{row}^{-}(\matr{W}_{ab}),
	\end{aligned}
	\end{small}
	\end{equation}
	
	Let the partial derivative vanish and considering the non-negative constraints ($\alpha_{ab}\matr{W}_{ab} = 0$ under K.K.T conditions), we obtain:
	\begin{equation}
	\begin{aligned}
	((\matr{V}\matr{H}^{T})_{ab} + \lambda_{\matr{W}}& C_{row}^{-}(\matr{W}_{ab}))\matr{W}_{ab} -\\
	((\matr{W}\matr{H}\matr{H}^{T})_{ab} & + \lambda_{\matr{W}} C_{row}^{+}(\matr{W}_{ab}))\matr{W}_{ab} = 0,
	\end{aligned}
	\end{equation}
	thus the update rule for $\matr{W}_{ab}$ is formulated as following:
	\begin{equation}
	\label{updateW1}
	\matr{W}_{ab} \leftarrow \matr{W}_{ab}\frac{(\matr{V}\matr{H}^{T})_{ab} + \lambda_{\matr{W}} C_{row}^{-}(\matr{W}_{ab})}{(\matr{W}\matr{H}\matr{H}^{T})_{ab} + \lambda_{\matr{W}} C_{row}^{+}(\matr{W}_{ab})}.
	\end{equation}
	Similarly, we have the update rule for $\matr{H}_{ab}$ as
	\begin{equation}
	\label{updateH1}
	\matr{H}_{ab} \leftarrow \matr{H}_{ab}\frac{(\matr{W}^{T}\matr{V})_{ab} + \lambda_{\matr{H}} C_{col}^{-}(\matr{H}_{ab})}{(\matr{W}^{T}\matr{W}\matr{H})_{ab} + \lambda_{\matr{H}} C_{col}^{+}(\matr{H}_{ab})}.
	\end{equation}
	
	As for the update rules, we have the following theorem:
	\begin{theorem}
		\label{theorem1}
		The objective function $\mathscr{F}_{1}$ in Eq.(\ref{obj1}) is non-increasing under the update rules in Eq.(\ref{updateW1}) and Eq.(\ref{updateH1}) with appropriate penalty coefficients $\lambda_{\matr{W}}$ and $\lambda_{\matr{H}}$.
	\end{theorem}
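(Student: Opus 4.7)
The plan is to adapt the classical auxiliary function technique of Lee and Seung to the penalised objective $\mathscr{F}_{1}$. For the $\matr{W}$-sweep I would construct a function $G_{1}(\matr{W}, \matr{W}^{t})$ that majorises $\mathscr{F}_{1}(\matr{W})$ everywhere with equality at $\matr{W} = \matr{W}^{t}$ and whose coordinate-wise minimiser over $\matr{W}$ coincides with the multiplicative update in Eq.(\ref{updateW1}). The chain
\begin{equation*}
\mathscr{F}_{1}(\matr{W}^{t+1}) \leqslant G_{1}(\matr{W}^{t+1}, \matr{W}^{t}) \leqslant G_{1}(\matr{W}^{t}, \matr{W}^{t}) = \mathscr{F}_{1}(\matr{W}^{t})
\end{equation*}
then delivers monotonic non-increase across the $\matr{W}$ sweep, and a symmetric argument on columns of $\matr{H}$ gives the same for Eq.(\ref{updateH1}).

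For the reconstruction term $\lVert \matr{V} - \matr{W}\matr{H}\rVert_{F}^{2}$ I would use the standard Lee-Seung bound: Jensen's inequality applied to $x \mapsto x^{2}$ decouples $(\matr{W}\matr{H}\matr{H}^{T}\matr{W}^{T})_{aa}$ into $\sum_{b} (\matr{W}^{t} \matr{H}\matr{H}^{T})_{ab}\matr{W}_{ab}^{2}/\matr{W}^{t}_{ab}$, while the linear cross-term $-2\Tr(\matr{V}\matr{H}^{T} \matr{W}^{T})$ passes through unchanged. This produces the $(\matr{V}\matr{H}^{T})_{ab}$ and $(\matr{W}\matr{H}\matr{H}^{T})_{ab}$ pieces of the numerator and denominator in Eq.(\ref{updateW1}).

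For each penalty triple $(q^{l}, r^{l}, s^{l})$ I would freeze the two exponential prefactors at $\matr{W}^{t}$, replacing $\exp(E(\matr{W}_{q^{l}:}, \matr{W}_{r^{l}:}))$ and $\exp(-E(\matr{W}_{q^{l}:}, \matr{W}_{s^{l}:}))$ by the constants $e_{l}^{+,t}$ and $e_{l}^{-,t}$; this is the ``approximation on partial terms'' that the introduction flags. After the freezing, the pull contribution $\lambda_{\matr{W}} e_{l}^{+,t}\lVert \matr{W}_{q^{l}:}-\matr{W}_{r^{l}:}\rVert^{2}$ is a convex quadratic: its diagonal pieces $\matr{W}_{ab}^{2}$ are already coordinate-wise and pass through, while the cross-terms $-2\matr{W}_{q^{l}k}\matr{W}_{r^{l}k}$ are majorised by $-2\matr{W}^{t}_{q^{l}k}\matr{W}^{t}_{r^{l}k}(1 + \log(\matr{W}_{q^{l}k}/\matr{W}^{t}_{q^{l}k}) + \log(\matr{W}_{r^{l}k}/\matr{W}^{t}_{r^{l}k}))$ via concavity of $\log$. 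The push contribution is a concave quadratic and can be majorised by its tangent plane at $\matr{W}^{t}$, leaving a linear residue whose sign-split contributions are also handled multiplicatively. Grouping the positive and negative pieces of $\partial G_{1}/\partial \matr{W}_{ab}$ reproduces exactly $C^{-}_{row}$ (numerator) and $C^{+}_{row}$ (denominator) from Eq.(\ref{CW}), and solving $\partial G_{1}/\partial \matr{W}_{ab} = 0$ recovers Eq.(\ref{updateW1}).

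The hard part I anticipate is controlling the push penalty: because $-\exp(-E(\cdot))$ is non-convex and the tangent-plane bound holds only locally around $\matr{W}^{t}$, the candidate $G_{1}$ is a genuine majoriser of $\mathscr{F}_{1}$ only when the reconstruction quadratic coefficient $(\matr{W}^{t}\matr{H}\matr{H}^{T})_{ab}/\matr{W}^{t}_{ab}$ comfortably dominates $\lambda_{\matr{W}}$ times the worst-case frozen weight $e_{l}^{-,t}$, coordinate-wise. This dominance is exactly what the qualifier ``appropriate penalty coefficients'' encodes; making it quantitative is the one nontrivial step, after which the rest of the argument is mechanical and mirrors symmetrically for $\matr{H}$ with $C_{col}^{\pm}$ replacing $C_{row}^{\pm}$.
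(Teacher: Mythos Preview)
Your overall framework is right: auxiliary functions, with the reconstruction term handled by the standard Lee--Seung quadratic bound and the real difficulty localised in the penalty. But the specific majoriser you propose for the penalty does not minimise to the update rule in Eq.~(\ref{updateW1}). Once you bound the cross-term $-2e_{l}^{+,t}\matr{W}_{q^{l}b}\matr{W}_{r^{l}b}$ by the logarithmic surrogate $-2e_{l}^{+,t}\matr{W}^{t}_{q^{l}b}\matr{W}^{t}_{r^{l}b}\bigl(1+\log(\matr{W}_{q^{l}b}/\matr{W}^{t}_{q^{l}b})+\log(\matr{W}_{r^{l}b}/\matr{W}^{t}_{r^{l}b})\bigr)$, the stationarity condition $\partial G_{1}/\partial \matr{W}_{ab}=0$ acquires a $1/\matr{W}_{ab}$ term, so the coordinate minimisation becomes a genuine quadratic in $\matr{W}_{ab}$ rather than the simple ratio of Eq.~(\ref{updateW1}). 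The claim that ``solving $\partial G_{1}/\partial\matr{W}_{ab}=0$ recovers Eq.~(\ref{updateW1})'' therefore does not go through as written.

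The paper takes a different route. It writes down a purely \emph{quadratic} single-coordinate auxiliary
\[
\mathscr{G}_{ab}(\matr{W},\matr{W}^{t})=\mathscr{F}_{ab}(\matr{W}^{t})+(\matr{W}_{ab}-\matr{W}^{t}_{ab})\mathscr{F}'_{ab}(\matr{W}^{t})+(\matr{W}_{ab}-\matr{W}^{t}_{ab})^{2}\,\frac{(\matr{W}^{t}\matr{H}\matr{H}^{T})_{ab}+\lambda_{\matr{W}}C^{+}_{row}(\matr{W}^{t}_{ab})}{\matr{W}^{t}_{ab}},
\]
whose minimiser is Eq.~(\ref{updateW1}) \emph{by construction}; all the work is then pushed into the second-order inequality $[(\matr{W}^{t}\matr{H}\matr{H}^{T})_{ab}+\lambda_{\matr{W}}C^{+}_{row}]/\matr{W}^{t}_{ab}\geqslant \tfrac{1}{2}\mathscr{F}''_{ab}(\matr{W}^{t})$. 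For the reconstruction part this is the classical Lee--Seung bound. For the penalty part $\mathscr{V}$, rather than freezing the exponential prefactors at $\matr{W}^{t}$, the paper replaces $e^{x}$ and $e^{-x}$ by \emph{piecewise-linear secant} approximants on unit intervals, computes $\mathscr{V}'$ and $\mathscr{V}''$ under that approximation, and checks $\mathscr{V}^{\prime+}_{ab}/\matr{W}^{t}_{ab}\geqslant \mathscr{V}''_{ab}$ segment by segment (the $e^{+}$ contributions cancel exactly and the $e^{-}$ contributions are manifestly non-negative). That piecewise linearisation is what the introduction calls ``approximation on partial terms''; it is a different relaxation from your prefactor-freezing, and it is tailored so that the resulting auxiliary stays quadratic and hence delivers the multiplicative update directly.
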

	
	We provide the prove of the convergence for the above updating rules and Theorem \ref{theorem1} in Section \ref{proof1}.
	
	\subsubsection{Updating rules for Divergence measure}
	As for the objective function in Eq.(\ref{obj2}), we first construct a Lagrange function with non-negative constraints $\matr{W}_{ij} \geqslant 0$ and $\matr{H}_{ij} \geqslant 0$:
	\begin{equation}
	\begin{aligned}
	& \mathscr{L}_{2} = D(\matr{V} || \matr{WH}) + \sum_{ij} \alpha_{ij}\matr{W}_{ij} + \sum_{ij} \beta_{ij}\matr{H}_{ij} \\
	&\; + \lambda_{\matr{W}} \sum_{l = 1}^{l_{\matr{W}}} \max(0, SD(\matr{W}_{q^{l}:},\matr{W}_{r^{l}:}) - SD(\matr{W}_{q^{l}:}, \matr{W}_{s^{l}:})) \\
	&\; + \lambda_{\matr{H}} \sum_{l = 1}^{l_{\matr{H}}} \max(0, SD(\matr{H}_{:q^{l}},\matr{H}_{:r^{l}}) - SD(\matr{H}_{:q^{l}}, \matr{H}_{:s^{l}})).
	\end{aligned}
	\end{equation}
	
	The partial derivative with respect to $\matr{W}_{ab}$ is:
	\begin{equation}
	\begin{aligned}
	\frac{\partial\mathscr{L}_{2}}{\partial\matr{W}_{ab}} &= \sum_{j}(\matr{H}_{bj} - \frac{\matr{V}_{aj}\matr{H}_{bj}}{(\matr{WH})_{aj}}) \\
	&\quad + \frac{1}{2}\lambda_{\matr{W}} P_{row}(\matr{W}_{ab}) + \alpha_{ab},
	\end{aligned}
	\end{equation}
	where
	\begin{equation}
	\label{PW}
	\begin{aligned}
	& P_{row}(\matr{W}_{ab}) = \sum_{l = 1}^{l_{\matr{W}}} \bigg\{\sum_{q^{l} = a} [g(\matr{W}_{q^{l}b}^{+},\matr{W}_{r^{l}b}^{+}) + g(\matr{W}_{q^{l}b}^{+},\matr{W}_{s^{l}b}^{+})] \\
	& + \sum_{r^{l}=a} g(\matr{W}_{r^{l}b}^{+},\matr{W}_{q^{l}b}^{+}) - \sum_{s^{l} = a} g(\matr{W}_{s^{l}b}^{+},\matr{W}_{q^{l}b}^{+})\bigg\},
	\end{aligned}
	\end{equation}
	\begin{equation}
	\matr{W}_{q^{l}b}^{+} = 
	\begin{cases}
	0, & \text{if the constraint $l$ is satisfied} \\
	\matr{W}_{q^{l}b} & \text{otherwise} 
	\end{cases}
	\end{equation}
	\begin{equation}
	g(x,y) = log\frac{x}{y} + (x - y)\frac{1}{x}.
	\end{equation}
	
	Let the partial derivative equal to zero as well as considering the non-negative constraints ($\alpha_{ab}\matr{W}_{ab} = 0$ under K.K.T conditions), we obtain:
	\begin{equation}
	- \matr{W}_{ab}\sum_{j}\frac{\matr{V}_{aj}\matr{H}_{bj}}{(\matr{WH})_{aj}} + \matr{W}_{ab}(\frac{1}{2}\lambda_{\matr{W}}P_{row}(\matr{W}_{ab}) + \sum_{j}\matr{H}_{bj}) = 0,
	\end{equation}
	thus the update rule for $\matr{W}_{ab}$ is formulated as following:
	\begin{equation}
	\label{updateW2}
	\matr{W}_{ab} \leftarrow \matr{W}_{ab}\frac{\sum_{j}\matr{V}_{aj}\matr{H}_{bj}/(\matr{WH})_{aj}}{\frac{1}{2}\lambda_{\matr{W}}P_{row}(\matr{W}_{ab}) + \sum_{j}\matr{H}_{bj}}.
	\end{equation}
	Similarly, we have the update rule for $\matr{H}_{ab}$ as
	\begin{equation}
	\label{updateH2}
	\matr{H}_{ab} \leftarrow \matr{H}_{ab}\frac{\sum_{i}\matr{V}_{ib}\matr{W}_{ia}/(\matr{WH})_{ib}}{\frac{1}{2}\lambda_{\matr{H}}P_{col}(\matr{H}_{ab}) + \sum_{i}\matr{W}_{ia}}.
	\end{equation}
	Notice that the value of $P_{row}(\matr{W}_{ab})$ and $P_{col}(\matr{H}_{ab})$ may be negative during updating. Thus if the denominator in an iteration is less than zero, then we abandon the penalty parts. Besides, we dynamically change the penalty coefficients to ensure the convergence since the update rules are obtained by approximation (see proof of convergence in Section \ref{proof2}).
	
	As for the update rules, we have the following theorem:
	\begin{theorem}
		\label{theorem2}
		The objective function $\mathscr{F}_{2}$ in Eq.(\ref{obj2}) is non-increasing under the update rules in Eq.(\ref{updateW2}) and Eq.(\ref{updateH2}) with appropriate penalty coefficients $\lambda_{\matr{W}}$ and $\lambda_{\matr{H}}$.
	\end{theorem}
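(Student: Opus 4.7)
The plan is to apply the auxiliary function technique of Lee and Seung, mirroring the strategy used for Theorem~\ref{theorem1} but adapted to the divergence cost and to the hinge-loss penalty. I would prove the statement one factor at a time: fix $\matr{H}$ and construct an auxiliary function $G(\matr{W},\matr{W}^{t})$ satisfying $G(\matr{W},\matr{W})=\mathscr{F}_{2}(\matr{W})$ and $G(\matr{W},\matr{W}^{t})\geqslant \mathscr{F}_{2}(\matr{W})$, then show that $\matr{W}^{t+1}=\argmin_{\matr{W}}G(\matr{W},\matr{W}^{t})$ reproduces the multiplicative form in Eq.(\ref{updateW2}). Monotonicity then follows from the standard sandwich $\mathscr{F}_{2}(\matr{W}^{t+1})\leqslant G(\matr{W}^{t+1},\matr{W}^{t})\leqslant G(\matr{W}^{t},\matr{W}^{t})=\mathscr{F}_{2}(\matr{W}^{t})$, and the argument for $\matr{H}$ is entirely symmetric.

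For the divergence fidelity term $D(\matr{V}\|\matr{WH})$, I would reuse the classical bound of Lee and Seung obtained by applying Jensen's inequality to $-\log\sum_{k}\matr{W}_{ak}\matr{H}_{kj}$. This bound separates over the entries of $\matr{W}$, and its stationarity in $\matr{W}_{ab}$ already produces the numerator $\sum_{j}\matr{V}_{aj}\matr{H}_{bj}/(\matr{WH})_{aj}$ and the baseline denominator $\sum_{j}\matr{H}_{bj}$ in Eq.(\ref{updateW2}). The binary activation $\matr{W}^{+}$ should be frozen at $\matr{W}^{t}$: constraints satisfied at the current iterate contribute zero to both $\mathscr{F}_{2}$ and to the penalty part of $G$, so only currently violated constraints enter the update, consistent with the derivation in Eq.(\ref{PW}).

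For each violated constraint I need a surrogate for $SD(\matr{W}_{q^{l}:},\matr{W}_{r^{l}:})-SD(\matr{W}_{q^{l}:},\matr{W}_{s^{l}:})$ that is tight at $\matr{W}^{t}$, separable in the entries of $\matr{W}$, and whose gradient at $\matr{W}^{t}$ matches the true gradient. Because $g(x,y)=\log(x/y)+(x-y)/x$ is (up to the factor $\tfrac{1}{2}$) the partial derivative of $SD$ with respect to its first argument, aggregating the $g$-contributions across the roles $q^{l}=a$, $r^{l}=a$ and $s^{l}=a$ exactly reproduces the gradient of the penalty at $\matr{W}^{t}$. Adding the resulting linear correction $\tfrac{1}{2}\lambda_{\matr{W}}P_{row}(\matr{W}_{ab}^{t})\,\matr{W}_{ab}$ to the divergence auxiliary function and setting the derivative to zero then recovers Eq.(\ref{updateW2}) in the desired multiplicative form, with $P_{row}$ appearing in the denominator as required.

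The main obstacle, and the reason the statement requires ``appropriate penalty coefficients'', is that this linear surrogate is not a genuine global majoriser of the penalty: the subtraction of two symmetric divergences combined with the hinge makes the penalty indefinite, so the tangent hyperplane at $\matr{W}^{t}$ neither upper- nor lower-bounds it in general. I would handle this by exploiting that the Lee--Seung auxiliary function for $D(\matr{V}\|\matr{WH})$ is strictly convex in $\matr{W}$ with curvature of order $\sum_{j}\matr{H}_{bj}/\matr{W}_{ab}^{t}$ along the $\matr{W}_{ab}$ direction, and by showing that for $\lambda_{\matr{W}}$ sufficiently small this curvature absorbs the residual second-order error of the penalty on a neighbourhood of $\matr{W}^{t}$ that contains $\matr{W}^{t+1}$. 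This is precisely the ``relaxation on partial terms'' flagged in the introduction, and it accounts for the two safeguards described after Eq.(\ref{updateH2}), namely the dynamic shrinkage of $\lambda_{\matr{W}}$ and the abandonment of the penalty whenever the denominator in Eq.(\ref{updateW2}) turns negative. Once the dominance inequality is established the sandwich argument closes the proof for $\matr{W}$, and the same construction applied to $G(\matr{H},\matr{H}^{t})$ with $P_{col}$ handles Eq.(\ref{updateH2}).
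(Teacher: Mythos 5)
Your proposal follows the same overall route as the paper -- Lee--Seung auxiliary function, Jensen's inequality for the divergence fidelity term, activation of the hinge frozen at $\matr{W}^{t}$, and the gradient terms $g(\cdot,\cdot)$ of $SD$ assembled into $P_{row}$ -- but the two arguments organise the unavoidable approximation differently. The paper keeps the hinge penalties \emph{verbatim} inside $\mathscr{G}_{ab}(\matr{W},\matr{W}^{t})$ (Eq.(\ref{G2})), so the majorisation $\mathscr{G}\geqslant\mathscr{F}$ is immediate for the penalty part; the price is paid at the minimisation step, where the exact $\argmin$ is intractable and the paper substitutes $\matr{W}^{t}$ into the penalty, writing $\matr{W}_{ab}^{t+1}\approx\cdots$ and appealing to experiments. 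You instead bake the linearised penalty into the surrogate itself, so that the $\argmin$ step is exact and recovers Eq.(\ref{updateW2}) cleanly; the price is paid at the majorisation step, since a tangent hyperplane to an indefinite function is not a global upper bound. You are explicit about this gap and propose to close it by showing that, for $\lambda_{\matr{W}}$ small enough, the curvature $\sum_{j}\matr{H}_{bj}/\matr{W}_{ab}^{t}$ of the Lee--Seung surrogate dominates the second-order error of the penalty near $\matr{W}^{t}$ -- a cleaner account of why ``appropriate penalty coefficients'' and the dynamic shrinkage of $\lambda$ in Algorithm~\ref{PRNMF_div} are needed than the paper provides.

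Two caveats. First, the curvature-domination step is asserted, not carried out: the Hessian of $SD(\matr{W}_{q^{l}:},\matr{W}_{r^{l}:})-SD(\matr{W}_{q^{l}:},\matr{W}_{s^{l}:})$ contains terms of order $\matr{W}_{r^{l}b}/\matr{W}_{q^{l}b}^{2}$ that are unbounded as entries approach zero, so ``$\lambda_{\matr{W}}$ sufficiently small'' cannot be chosen uniformly without additional lower bounds on the iterates, and you would also need to confine $\matr{W}^{t+1}$ to the neighbourhood where the bound holds. Second, freezing the hinge activation at $\matr{W}^{t}$ is itself a source of non-monotonicity that neither you nor the paper addresses: a constraint inactive at $\matr{W}^{t}$ contributes nothing to the update but may become violated at $\matr{W}^{t+1}$, increasing $\mathscr{F}_{2}$. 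So your proposal, like the paper's argument, establishes the theorem only up to these approximations; it is the more honest of the two presentations but not a complete proof either.
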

	
	We provide the prove of the convergence for the above updating rules and Theorem \ref{theorem2} in Section \ref{proof2}.
	
	\subsection{Proofs of Convergence and Theorems} 
	We construct an auxiliary function $\mathscr{G}(x, x^{\prime})$ to help prove Theorem \ref{theorem1} \& \ref{theorem2}, which satisfies the conditions $\mathscr{G}(x, x^{\prime}) \geqslant \mathscr{F}(x)$ and $\mathscr{G}(x,x) = \mathscr{F}(x)$, and guarantees $\mathscr{F}(x)$ to be non-increasing under the following update:
	\begin{equation}
	\label{arg}
	x^{t+1} = \argmin_x \mathscr{G}(x,x^{\prime}).
	\end{equation}
	\quad We illustrate our proofs for $\matr{W}_{ab}$ and the proofs for $\matr{H}_{ab}$ can be derived in a similar fashion. Let $\mathscr{F}_{ab}(\matr{W})$ denote part of $\mathscr{F}(\matr{W})$ concerning $\matr{W}_{ab}$ and so as $\mathscr{G}_{ab}(\matr{W},\matr{W}^{t})$.
	
	\subsubsection{Convergence of Euclidean updating rules and Theorem 1}
	\label{proof1}
	As for updating rules in Eq.(\ref{updateW1}) \& (\ref{updateH1}), we have
	\begin{lemma}
		Function
		\begin{equation}
		\label{G1}
		\begin{aligned}
		\mathscr{G}_{ab}&(\matr{W},\matr{W}^{t}) = \mathscr{F}_{ab}(\matr{W}^{t}) + (\matr{W}_{ab} - \matr{W}_{ab}^{t})\mathscr{F}_{ab}^{\prime}(\matr{W}^{t}) \\
		+ & (\matr{W}_{ab} - \matr{W}_{ab}^{t})^{2} \frac{(\matr{W}^{t}\matr{H}\matr{H}^{T})_{ab}  + \lambda_{\matr{W}}C_{row}^{+}(\matr{W}^{t}_{ab})}{\matr{W}_{ab}^{t}}
		\end{aligned}
		\end{equation}
		is an auxiliary function for $\mathscr{F}_{ab}(\matr{W})$.
	\end{lemma}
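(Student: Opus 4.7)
The plan is to verify the two defining conditions of an auxiliary function: the equality $\mathscr{G}_{ab}(\matr{W}^t,\matr{W}^t)=\mathscr{F}_{ab}(\matr{W}^t)$, which is immediate since the linear and quadratic terms in $\matr{W}_{ab}-\matr{W}_{ab}^t$ both vanish at $\matr{W}=\matr{W}^t$, and the majorisation inequality $\mathscr{G}_{ab}(\matr{W},\matr{W}^t)\geqslant \mathscr{F}_{ab}(\matr{W})$. Since the proposed $\mathscr{G}_{ab}$ already matches the first-order Taylor expansion of $\mathscr{F}_{ab}$ at $\matr{W}^t$ exactly, the whole burden reduces to showing that the remainder term of $\mathscr{F}_{ab}$ past first order is dominated by the quadratic term
\[
\bigl(\matr{W}_{ab}-\matr{W}_{ab}^t\bigr)^2\,\frac{(\matr{W}^t\matr{H}\matr{H}^T)_{ab}+\lambda_{\matr{W}} C_{row}^{+}(\matr{W}^t_{ab})}{\matr{W}_{ab}^t}.
\]

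My approach is to split $\mathscr{F}_{ab}$ into the reconstruction part $\lVert\matr{V}-\matr{W}\matr{H}\rVert_F^2$ and the exponential penalty part, and bound their remainders separately. For the reconstruction part, the scalar function $\matr{W}_{ab}\mapsto\lVert\matr{V}-\matr{W}\matr{H}\rVert_F^2$ is a parabola with leading coefficient $(\matr{H}\matr{H}^T)_{bb}$, so the exact second-order Taylor remainder equals $(\matr{H}\matr{H}^T)_{bb}(\matr{W}_{ab}-\matr{W}_{ab}^t)^2$. I will invoke the classical Lee--Seung bound
\[
(\matr{W}^t\matr{H}\matr{H}^T)_{ab}=\sum_{c}\matr{W}^t_{ac}(\matr{H}\matr{H}^T)_{cb}\geqslant \matr{W}^t_{ab}(\matr{H}\matr{H}^T)_{bb},
\]
which gives $(\matr{H}\matr{H}^T)_{bb}\leqslant (\matr{W}^t\matr{H}\matr{H}^T)_{ab}/\matr{W}_{ab}^t$ and accounts for the first piece of the quadratic coefficient in $\mathscr{G}_{ab}$.

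The harder part is the exponential penalty, and it is where the relaxation mentioned in the paper must enter. My plan is to Taylor expand each $\exp\bigl(\pm E(\cdot,\cdot)\bigr)$ around $\matr{W}^t$ in the single variable $\matr{W}_{ab}$, keeping the linear term exact (it contributes to $C_{row}(\matr{W}^t_{ab})=C_{row}^+(\matr{W}^t_{ab})-C_{row}^-(\matr{W}^t_{ab})$ through $\mathscr{F}_{ab}'(\matr{W}^t)$), and then to relax the second-order remainder by freezing the exponential prefactors at their values at $\matr{W}^t$. Under this relaxation each penalty contribution becomes a quadratic in $\matr{W}_{ab}-\matr{W}_{ab}^t$, and summing over the constraint list the coefficient can be bounded above by $\lambda_{\matr{W}} C_{row}^{+}(\matr{W}^t_{ab})/\matr{W}_{ab}^t$, using the same Lee--Seung style inequality $C_{row}^{+}(\matr{W}^t_{ab})\geqslant \matr{W}_{ab}^t\cdot(\text{coefficient})$ that arises from the non-negativity of all $\matr{W}^t_{\cdot b}$ entries appearing in the sum defining $C_{row}^{+}$.

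The main obstacle is precisely this last bound on the exponential penalty: the true second derivative of the penalty with respect to $\matr{W}_{ab}$ involves cross terms and exponential factors that depend on $\matr{W}$, not $\matr{W}^t$, so a fully rigorous majorisation would require controlling $\exp\bigl(E(\matr{W}_{q^l:},\matr{W}_{r^l:})\bigr)$ uniformly, which is impossible without a restriction on $\matr{W}$. This is exactly why the theorem carries the qualifier \emph{with appropriate penalty coefficients} $\lambda_{\matr{W}}$: provided $\lambda_{\matr{W}}$ is small enough that the frozen-prefactor approximation dominates the true remainder on the trust region traversed by the update, the inequality holds, and the practical validity of this relaxation is what the authors justify by their experiments. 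Once this majorisation is established, minimising the quadratic $\mathscr{G}_{ab}$ in $\matr{W}_{ab}$ via \eqref{arg} reproduces exactly the update rule in Eq.~(\ref{updateW1}), closing the proof of Theorem~\ref{theorem1}.
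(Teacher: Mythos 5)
Your overall skeleton matches the paper's: check $\mathscr{G}_{ab}(\matr{W},\matr{W})=\mathscr{F}_{ab}(\matr{W})$, reduce the majorisation to the curvature bound $2[(\matr{W}^{t}\matr{H}\matr{H}^{T})_{ab}+\lambda_{\matr{W}}C_{row}^{+}(\matr{W}^{t}_{ab})]/\matr{W}_{ab}^{t}\geqslant \mathscr{F}_{ab}^{\prime\prime}(\matr{W}^{t})$, split $\mathscr{F}=\mathscr{U}+\lambda_{\matr{W}}\mathscr{V}$, and dispose of the reconstruction part with the Lee--Seung inequality $(\matr{W}^{t}\matr{H}\matr{H}^{T})_{ab}\geqslant\matr{W}_{ab}^{t}(\matr{H}\matr{H}^{T})_{bb}$, exactly as the paper does.

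The divergence is in the only nontrivial step, the penalty bound $\mathscr{V}^{\prime+}_{ab}(\matr{W}^{t})/\matr{W}_{ab}^{t}\geqslant\mathscr{V}^{\prime\prime}_{ab}(\matr{W}^{t})$, and there your sketch has a concrete gap. Freezing the exponential prefactor at $\matr{W}^{t}$ still leaves, in the second derivative of $\exp(E(\matr{W}_{q^{l}:},\matr{W}_{r^{l}:}))$ with respect to $\matr{W}_{ab}$ (say $q^{l}=a$), the chain-rule term $\exp(E)\,(\partial E/\partial\matr{W}_{ab})^{2}=4\exp(E)(\matr{W}^{t}_{q^{l}b}-\matr{W}^{t}_{r^{l}b})^{2}$ on top of $2\exp(E)$; the matching resource in $C_{row}^{+}(\matr{W}^{t}_{ab})/\matr{W}_{ab}^{t}$ is only $\exp(E)\matr{W}^{t}_{q^{l}b}/\matr{W}^{t}_{ab}=\exp(E)$, so non-negativity of the entries does not give the claimed domination whenever $|\matr{W}^{t}_{q^{l}b}-\matr{W}^{t}_{r^{l}b}|$ is not small. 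Shrinking $\lambda_{\matr{W}}$ does not repair this, since $\lambda_{\matr{W}}$ multiplies both sides of the penalty inequality; it only helps to the extent that the strict slack in the Lee--Seung step absorbs the deficit, and that slack can vanish. The paper's device is different and is precisely what removes this obstruction: it replaces $\mathrm{e}^{x}$ and $\mathrm{e}^{-x}$ by piecewise \emph{linear} functions of $x=E(\cdot,\cdot)$, so each penalty summand becomes piecewise quadratic in $\matr{W}_{ab}$, the $(\partial E)^{2}$ term disappears from the curvature entirely, and the residual inequality reduces to the explicitly non-negative sum $\sum_{l}(\mathrm{e}^{-(n-1)}-\mathrm{e}^{-n})\bigl(\sum_{q^{l}=a}(\matr{W}^{t}_{s^{l}b}/\matr{W}^{t}_{q^{l}b}+1)+\sum_{s^{l}=a}(\matr{W}^{t}_{q^{l}b}/\matr{W}^{t}_{s^{l}b}+1)\bigr)$. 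Both routes are relaxations rather than rigorous majorisations (and you correctly diagnose why a rigorous one is unavailable), but the piecewise linearisation is the missing idea that makes the penalty curvature actually checkable; your frozen-prefactor version, as stated, leaves the decisive inequality unproved and in general false.
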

	
	\begin{proof}
		$\mathscr{G}_{ab}(\matr{W},\matr{W}) = \mathscr{F}_{ab}(\matr{W})$ is obvious. Apply Taylor Expansion to $\mathscr{F}_{ab}(\matr{W})$ on the point $\matr{W}_{ab}^{t}$, we obtain
		\begin{equation}
		\label{F1}
		\begin{aligned}
		\mathscr{F}_{ab}(\matr{W}) & = \mathscr{F}_{ab}(\matr{W}^{t}) + (\matr{W}_{ab} - \matr{W}_{ab}^{t})\mathscr{F}_{ab}^{\prime}(\matr{W}^{t}) \\
		&\quad + \frac{1}{2}(\matr{W}_{ab} - \matr{W}_{ab}^{t})^{2}F_{ab}^{\prime\prime}(\matr{W}^{t}).
		\end{aligned}
		\end{equation}
		Comparing Eq.(\ref{G1}) and Eq.(\ref{F1}), in order to prove $\mathscr{G}_{ab}(\matr{W},\matr{W}^{t}) \geqslant \mathscr{F}_{ab}(\matr{W})$, we only need to prove
		\begin{equation}
		\label{ineq}
		\frac{2[(\matr{W}^{t}\matr{H}\matr{H}^{T})_{ab} +  \lambda_{\matr{W}}C_{row}^{+}(\matr{W}^{t}_{ab})]}{\matr{W}_{ab}^{t}} \geqslant F_{ab}^{\prime\prime}(\matr{W}^{t}).
		\end{equation}
		Rewrite $\mathscr{F}(\matr{W})$ as an addition of two functions (omitting the penalties for $\matr{H}$ since it is irrelevant with $\matr{W}$)
		\begin{equation}
		\begin{aligned}
		\mathscr{F}(\matr{W}) & = \lVert \matr{V} - \matr{W}\matr{H} \rVert_{F}^{2} \\
		&\; + \lambda_{\matr{W}} \sum_{l = 1}^{l_{\matr{W}}} [\exp( E(\matr{W}_{q^{l}:},\matr{W}_{r^{l}:})) + \exp(-E(\matr{W}_{q^{l}:}, \matr{W}_{s^{l}:}))] \\
		& = \mathscr{U}(\matr{W}) + \lambda_{\matr{W}}\mathscr{V}(\matr{W}),
		\end{aligned}
		\end{equation}
		Then Eq.(\ref{ineq}) becomes
		\begin{equation}
		\frac{\mathscr{U}^{\prime}_{ab}(\matr{W}^{t}) + \lambda_{\matr{W}}\mathscr{V}^{\prime+}_{ab}(\matr{W}^{t})}{\matr{W}_{ab}^{t}} \geqslant \mathscr{U}^{\prime\prime}_{ab}(\matr{W}^{t}) + \lambda_{\matr{W}}\mathscr{V}^{\prime\prime}_{ab}(\matr{W}^{t}),
		\end{equation}
		where $\mathscr{V}^{\prime+}_{ab}(\matr{W}^{t})$ is the positive part of $\mathscr{V}^{\prime}_{ab}(\matr{W}^{t})$. Since
		\begin{equation}
		\begin{aligned}
		&\quad \frac{\mathscr{U}^{\prime}_{ab}(\matr{W}^{t})}{\matr{W}_{ab}^{t}} - \mathscr{U}^{\prime\prime}_{ab}(\matr{W}^{t}) \\
		& = \frac{(\matr{W}^{t}\matr{H}\matr{H}^{T})_{ab}}{\matr{W}_{ab}^{t}} - (\matr{H}\matr{H}^{T})_{bb} \\
		& = \frac{\sum_{k=1}^{K} \matr{W}_{ak}^{t} (\matr{H}\matr{H}^{T})_{kb}}{\matr{W}_{ab}^{t}} - (\matr{H}\matr{H}^{T})_{bb} \\
		& \geqslant \frac{\matr{W}_{ab}^{t} (\matr{H}\matr{H}^{T})_{bb}}{\matr{W}_{ab}^{t}} - (\matr{H}\matr{H}^{T})_{bb} \\
		& \geqslant 0,
		\end{aligned}
		\end{equation}
		the only remaining work is to prove
		\begin{equation}
		\frac{\mathscr{V}^{\prime+}_{ab}(\matr{W}^{t})}{\matr{W}_{ab}^{t}} \geqslant \mathscr{V}^{\prime\prime}_{ab}(\matr{W}^{t}).
		\end{equation}
		Recall that the function $\mathscr{V}(\matr{W})$ is a summation of additions of two natural exponential functions, and the exponents are quadratic functions of $\matr{W}$. Calculation of their derivatives is intricate and causes difficulties to prove the above inequality. Here we give a proof by applying approximations to the natural exponential functions.\\
		For the general natural exponential function $f(x) = \mathrm{e}^{x}$ with support $[0,+\infty)$, we can use a piecewise function to approximate it
		\begin{equation}
		\mathrm{e}^{x} \approx 
		\begin{cases}
		(\mathrm{e}^{1} - \mathrm{e}^{0})x + \mathrm{e}^{0},& x \in [0,1) \\
		(\mathrm{e}^{2} - \mathrm{e}^{1})x - \mathrm{e}^{2} + 2\mathrm{e}^{1}, & x \in[1,2)\\
		(\mathrm{e}^{3} - \mathrm{e}^{2})x - 2\mathrm{e}^{3} + 3\mathrm{e}^{2}, & x \in[2,3)\\
		\vdots \\
		(\mathrm{e}^{n} - \mathrm{e}^{n-1})x - (n-1)\mathrm{e}^{n} + n\mathrm{e}^{n-1}, & x \in[n-1,n) \\
		\vdots
		\end{cases},
		\end{equation}
		where each sub-function is a linear function of $x$. The relationship between $f(x)$ and the piecewise function is demonstrated in Fig. \ref{ex}.
		\begin{figure}[htbp]
			\begin{center}
				\includegraphics[width=0.32\textwidth]{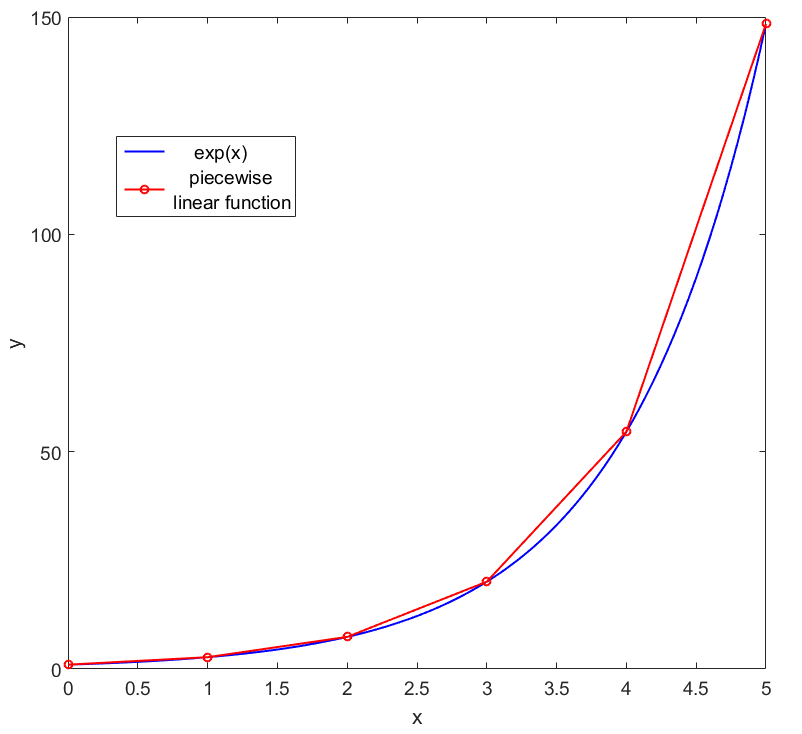}
				\caption{Using a piecewise linear function to approximate $\mathrm{e}^{x}$.\label{ex}}
			\end{center}
		\end{figure}
		Similarly, we can approximate $g(x) = \mathrm{e}^{-x}$ with a piecewise function whose $n^{\text{th}}$ sub-function is
		\begin{equation}	
		(\mathrm{e}^{-n} - \mathrm{e}^{-(n-1)})x - (n-1)\mathrm{e}^{-n} + n\mathrm{e}^{-(n-1)},  x \in[n-1,n).
		\end{equation}
		If the support $[0,+\infty)$ is divided into infinite intervals, the piecewise functions will be infinitely close to $f(x)$ and $g(x)$.\\
		Now we can approximate $\mathscr{V}(\matr{W})$ using the above two piecewise functions. The $n^{\text{th}}$ part of it is 
		\begin{equation}
		\begin{aligned}
		& \mathscr{V}(\matr{W})^{(n)} \approx \sum_{l = 1}^{l_{\matr{W}}} \bigg\{ \\
		& (\mathrm{e}^{n} - \mathrm{e}^{n-1})E(\matr{W}_{q^{l}:},\matr{W}_{r^{l}:}) + (\frac{1}{\mathrm{e}^{n}} - \frac{1}{\mathrm{e}^{n-1}})E(\matr{W}_{q^{l}:}, \matr{W}_{s^{l}:})\\ 
		& - (n-1)\mathrm{e}^{n} + n\mathrm{e}^{n-1} - (n-1)\mathrm{e}^{-n} + n\mathrm{e}^{-(n-1)}\bigg\}.
		\end{aligned}
		\end{equation}
		Its first and second order derivatives w.r.t $\matr{W}_{ab}$ are
		\begin{equation}
		\begin{aligned}
		& \mathscr{V}^{\prime}_{ab}(\matr{W})^{(n)} \approx \sum_{l = 1}^{l_{\matr{W}}} \bigg\{\\
		& (\mathrm{e}^{n} - \mathrm{e}^{n-1})[\sum_{q^{l} = a} (\matr{W}_{q^{l}b} - \matr{W}_{r^{l}b}) + \sum_{r^{l} = a} (\matr{W}_{r^{l}b} - \matr{W}_{q^{l}b})] +\\
		& (\frac{1}{\mathrm{e}^{n}} - \frac{1}{\mathrm{e}^{n-1}})[\sum_{q^{l} = a} (\matr{W}_{q^{l}b} - \matr{W}_{s^{l}b}) + \sum_{s^{l} = a} (\matr{W}_{s^{l}b} - \matr{W}_{q^{l}b})] \bigg\},
		\end{aligned}
		\end{equation}
		\begin{equation}
		\begin{aligned}
		& \mathscr{V}^{\prime\prime}_{ab}(\matr{W})^{(n)} \approx \sum_{l = 1}^{l_{\matr{W}}} \bigg\{(\mathrm{e}^{n} - \mathrm{e}^{n-1})(\sum_{q^{l} = a} 1 + \sum_{r^{l} = a} 1) +\\
		& (\frac{1}{\mathrm{e}^{n}} - \frac{1}{\mathrm{e}^{n-1}})(\sum_{q^{l} = a} 1 + \sum_{s^{l} = a} 1) \bigg\}.
		\end{aligned}
		\end{equation}
		And the positive part of the first derivative is
		\begin{equation}
		\begin{aligned}
		& \mathscr{V}^{\prime+}_{ab}(\matr{W})^{(n)} \approx \sum_{l = 1}^{l_{\matr{W}}} \bigg\{(\mathrm{e}^{n} - \mathrm{e}^{n-1})(\sum_{q^{l} = a} \matr{W}_{q^{l}b} + \sum_{r^{l} = a} \matr{W}_{r^{l}b}) -\\
		& (\frac{1}{\mathrm{e}^{n}} - \frac{1}{\mathrm{e}^{n-1}})(\sum_{q^{l} = a}  \matr{W}_{s^{l}b} + \sum_{s^{l} = a} \matr{W}_{q^{l}b}) \bigg\}.
		\end{aligned}
		\end{equation}
		Then we have
		\begin{equation}
		\begin{aligned}
		&\quad \frac{\mathscr{V}^{\prime+}_{ab}(\matr{W}^{t})^{(n)}}{\matr{W}_{ab}^{t}} - \mathscr{V}^{\prime\prime}_{ab}(\matr{W}^{t})^{(n)} \\
		& = \sum_{l = 1}^{l_{\matr{W}}} \bigg\{ 
		(\frac{1}{\mathrm{e}^{n-1}} - \frac{1}{\mathrm{e}^{n}})(\sum_{q^{l} = a}  (\frac{\matr{W}_{s^{l}b}^{t}}{\matr{W}_{q^{l}b}^{t}} + 1) + \sum_{s^{l} = a} (\frac{\matr{W}_{q^{l}b}^{t}}{\matr{W}_{s^{l}b}^{t}} + 1)) \bigg\} \\
		& \geqslant 0
		\end{aligned}
		\end{equation}
		Thus Eq.(\ref{ineq}) holds.
	\end{proof}
	
	Now we can prove the convergence of Theorem \ref{theorem1}:
	\begin{proof}[Proof of Theorem \ref{theorem1}]
		According to Eq.(\ref{arg}) and Eq.(\ref{G1}), we get
		\begin{equation}
		\begin{aligned}
		\matr{W}_{ab}^{t+1} & = \argmin_{\matr{W}_{ab}} \mathscr{G}_{ab}(\matr{W}, \matr{W}^{t}) \\
		& = \matr{W}_{ab}^{t}\frac{(\matr{V}\matr{H}^{T})_{ab} + \lambda_{\matr{W}} C_{row}^{-}(\matr{W}_{ab})}{(\matr{W}\matr{H}\matr{H}^{T})_{ab} + \lambda_{\matr{W}} C_{row}^{+}(\matr{W}_{ab})}.
		\end{aligned}
		\end{equation}
		As $\mathscr{G}_{ab}(\matr{W},\matr{W}^{t})$ is an auxiliary function, $\mathscr{F}_{ab}(\matr{W})$ is non-increasing under this update rule.
	\end{proof}
	
	\subsubsection{Convergence of Divergence updating rules and Theorem 2}
	\label{proof2}
	As for updating rules in Eq.(\ref{updateW2}) \& (\ref{updateH2}), we have
	\begin{lemma}
		Function
		\begin{equation}
		\label{G2}
		\begin{aligned}
		\mathscr{G}_{ab}&(\matr{W}, \matr{W}^{t}) = \sum_{j}(\matr{V}_{aj} \log \matr{V}_{aj} - \matr{V}_{aj}) + \sum_{j}(\matr{W}\matr{H})_{aj} \\
		- & \sum_{jk} \matr{V}_{aj}\frac{\matr{W}_{ak}^{t}\matr{H}_{kj}}{(\matr{W}^{t}\matr{H})_{aj}}(\log \matr{W}_{ak}\matr{H}_{kj} - \log \frac{\matr{W}_{ak}^{t}\matr{H}_{kj}}{(\matr{W}^{t}\matr{H})_{aj}}) \\
		+ & \lambda_{\matr{W}} \sum_{l = 1}^{l_{\matr{W}}} \max(0, SD(\matr{W}_{q^{l}:},\matr{W}_{r^{l}:}) - SD(\matr{W}_{q^{l}:}, \matr{W}_{s^{l}:}) \\
		+ & \lambda_{\matr{H}} \sum_{l = 1}^{l_{\matr{H}}} \max(0, SD(\matr{H}_{:q^{l}},\matr{H}_{:r^{l}}) - SD(\matr{H}_{:q^{l}}, \matr{H}_{:s^{l}}). 
		\end{aligned}
		\end{equation}
		This is an auxiliary function for $\mathscr{F}_{ab}(\matr{W})$.
	\end{lemma}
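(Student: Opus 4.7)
The plan is to verify both defining properties of an auxiliary function: $\mathscr{G}_{ab}(\matr{W},\matr{W}) = \mathscr{F}_{ab}(\matr{W})$, and $\mathscr{G}_{ab}(\matr{W},\matr{W}^{t}) \geqslant \mathscr{F}_{ab}(\matr{W})$ for every admissible non-negative $\matr{W}$. A convenient first observation is that the two hinge-loss penalty sums in Eq.(\ref{G2}) depend only on $\matr{W}$ (not on $\matr{W}^{t}$) and coincide term by term with the corresponding penalty sums in $\mathscr{F}_{ab}(\matr{W})$. Both requirements therefore decouple into a divergence piece and a penalty piece, with the penalty piece giving equality in both properties for free; the entire burden shifts to bounding $D(\matr{V}\|\matr{W}\matr{H})$ by the first three lines of Eq.(\ref{G2}).

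For the divergence piece I would reproduce the classical Jensen argument of Lee and Seung. Writing
\begin{equation*}
D(\matr{V}\|\matr{W}\matr{H}) = \sum_{j}(\matr{V}_{aj}\log\matr{V}_{aj} - \matr{V}_{aj}) + \sum_{j}(\matr{W}\matr{H})_{aj} - \sum_{j}\matr{V}_{aj}\log(\matr{W}\matr{H})_{aj},
\end{equation*}
I would define the weights $\alpha_{jk} = \matr{W}_{ak}^{t}\matr{H}_{kj}/(\matr{W}^{t}\matr{H})_{aj}$, which are non-negative and satisfy $\sum_{k}\alpha_{jk}=1$. The convexity of $-\log$ then gives
\begin{equation*}
-\log(\matr{W}\matr{H})_{aj} = -\log\sum_{k}\alpha_{jk}\frac{\matr{W}_{ak}\matr{H}_{kj}}{\alpha_{jk}} \leqslant -\sum_{k}\alpha_{jk}\log\frac{\matr{W}_{ak}\matr{H}_{kj}}{\alpha_{jk}}.
\end{equation*}
Multiplying by $\matr{V}_{aj}$, summing over $j$, and combining with the data-only and linear-in-$\matr{W}$ terms produces exactly the first three lines of Eq.(\ref{G2}), which therefore bound $D(\matr{V}\|\matr{W}\matr{H})$ from above and establish the inequality. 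At $\matr{W}=\matr{W}^{t}$ the weights become proportional to the quantities inside the logarithm, so Jensen collapses to an identity and equality holds.

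Because the hinge-loss penalty terms pass through verbatim rather than being majorised, the inequality itself is essentially immediate, and I expect no real obstacle at the level of the lemma. The subtlety I would flag, and which the paper already signals just before Theorem \ref{theorem2}, appears only at the next step, when one tries to minimise $\mathscr{G}_{ab}(\matr{W},\matr{W}^{t})$ in closed form to recover Eq.(\ref{updateW2}). Because $\max(0,\cdot)$ combined with $SD$ is non-smooth and couples coordinates, the stationarity equation $\partial\mathscr{G}_{ab}/\partial \matr{W}_{ab}=0$ is implicit in $\matr{W}_{ab}$; freezing $P_{row}$ at $\matr{W}^{t}$ decouples the per-coordinate minimisation and yields the multiplicative update, and dynamically adjusting $\lambda_{\matr{W}}$ (so that the denominator stays positive and the descent ratio stays bounded by one) is what ultimately preserves the non-increasing property required for Theorem \ref{theorem2}. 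I would therefore close the proof of the lemma cleanly via the Jensen argument and defer the discussion of this approximation to the subsequent derivation of the update rule.
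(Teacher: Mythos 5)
Your proof is correct and follows essentially the same route as the paper: the hinge-loss penalty terms carry over unchanged, and the divergence part is majorised via Jensen's inequality with the weights $\alpha_{k} = \matr{W}_{ak}^{t}\matr{H}_{kj}/(\matr{W}^{t}\matr{H})_{aj}$, with equality at $\matr{W}=\matr{W}^{t}$. Your closing remark correctly identifies that the real subtlety lies in the subsequent minimisation of $\mathscr{G}_{ab}$ (the freezing of $P_{row}$ at $\matr{W}^{t}$), not in the lemma itself, which matches where the paper places its approximation.
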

	
	\begin{proof}
		$\mathscr{G}_{ab}(\matr{W},\matr{W}) = \mathscr{F}_{ab}(\matr{W})$ is obvious. To prove that $\mathscr{G}_{ab}(\matr{W},\matr{W}^{t}) \geqslant \mathscr{F}_{ab}(\matr{W})$, we utilize the Jensen Inequality to obtain
		\begin{equation}
		-\log \sum_{k} \matr{W}_{ak}\matr{H}_{kj} \leqslant - \sum_{k} \alpha_{k} \log \frac{\matr{W}_{ak}\matr{H}_{kj}}{\alpha_{k}}
		\end{equation}
		which holds for all non-negative $\alpha_{k}$ that sum to unity. Setting
		\begin{equation}
		\alpha_{k} = \frac{\matr{W}_{ak}^{t}\matr{H}_{kj}}{(\matr{W}^{t}\matr{H})_{aj}}
		\end{equation}
		we obtain
		\begin{equation}
		\begin{aligned}
		-\sum_{j}\matr{V}_{aj}\log \sum_{k} \matr{W}_{ak}\matr{H}_{kj} & \leqslant \\
		-\sum_{jk} \matr{V}_{aj}\frac{\matr{W}_{ak}^{t}\matr{H}_{kj}}{(\matr{W}^{t}\matr{H})_{aj}}(\log \matr{W}_{ak}\matr{H}_{kj} & - \log \frac{\matr{W}_{ak}^{t}\matr{H}_{kj}}{(\matr{W}^{t}\matr{H})_{aj}}).
		\end{aligned}
		\end{equation}
		This is the only different part between $\mathscr{F}_{ab}(\matr{W})$ and $\mathscr{G}_{ab}(\matr{W},\matr{W}^{t})$. Thus, $\mathscr{F}_{ab}(\matr{W}) \leqslant \mathscr{G}_{ab}(\matr{W},\matr{W}^{t})$ holds.
	\end{proof}
	
	Now we can prove the convergence of Theorem \ref{theorem2}:
	\begin{proof}[Proof of Theorem \ref{theorem2}]
		According to Eq.(\ref{arg}) and Eq.(\ref{G2}), we get
		\begin{equation}
		\begin{aligned}
		\matr{W}_{ab}^{t+1} & = \argmin_{\matr{W}_{ab}} \mathscr{G}_{ab}(\matr{W}, \matr{W}^{t}) \\
		& \approx \matr{W}_{ab}^{t}\frac{\sum_{j}V_{aj}\matr{H}_{bj}/(\matr{WH})_{aj}}{\frac{1}{2}\lambda_{\matr{W}}P_{row}(\matr{W}_{ab}) + \sum_{j}\matr{H}_{bj}}.
		\end{aligned}
		\end{equation}
		Notice that $\matr{W}$ also exists in the penalty regularisation terms, and it is difficult to calculate their corresponding derivatives with respect to $\matr{W}$. For simplicity and efficiency, we substitute $\matr{W}$ in penalties with $\matr{W}^{t}$ so that they become irrelevant to $\matr{W}$. This approximation of the local optima of $\mathscr{G}_{ab}(\matr{W},\matr{W}^{t})$ is numerically proved feasible through our experiments.\\
		As $\mathscr{G}_{ab}(\matr{W},\matr{W}^{t})$ is an auxiliary function, $\mathscr{F}_{ab}(\matr{W})$ is non-increasing under this update rule.
	\end{proof}
	
	The complete algorithms of RPR-NMF are demonstrated in Algorithm \ref{PRNMF_euc} and Algorithm \ref{PRNMF_div}.
	
	\begin{algorithm}
		\caption{Relative Pairwise Relationship constrained Non-negative Matrix Factorisation (RPR-NMF) using Euclidean measure}
		\label{PRNMF_euc}
		\begin{algorithmic}[1]
			\REQUIRE $\matr{V} \in \mathbb{R}^{N \times M}_{\geqslant 0}$ (factorising matrix),\\
			$K \in \mathbb{N}^{+}$ (latent dimension),\\
			$\matr{L}_{\matr{W}} \in \mathbb{N}^{l_{\matr{W}} \times 3}$ (pairwise relationship constraints on $\matr{W}$),\\
			$\matr{L}_{\matr{H}} \in \mathbb{N}^{l_{\matr{H}} \times 3}$ (pairwise relationship constraints on $\matr{H}$),\\
			$\lambda_{\matr{W}} \in \mathbb{R}^{+}, \lambda_{\matr{H}} \in \mathbb{R}^{+}$ (penalty coefficients)\\
			\ENSURE $\matr{W} \in \mathbb{R}^{N \times K}_{\geqslant 0}$ (the left factorised matrix),\\
			$\matr{H} \in \mathbb{R}^{K \times M}_{\geqslant 0}$ (the right factorised matrix)\\
			\STATE Initialise $\matr{W}$ and $\matr{H}$ as non-negative random matrices
			\WHILE{Terminal conditions not satisfied}
			\FOR{$k = 1$ \TO $K$}
			\STATE // Update $\matr{W}$
			\FOR{$a = 1$ \TO $N$}
			\STATE Calculate $C_{row}^{+}(\matr{W}_{ab})$ and $C_{row}^{-}(\matr{W}_{ab})$ as Eq.(\ref{CW})	
			\STATE $\matr{W}_{ak} \leftarrow \matr{W}_{ak}\frac{(\matr{V}\matr{H})_{ak} + \lambda_{\matr{W}}C_{row}^{-}(\matr{W}_{ab})}{(\matr{W}\matr{H}\matr{H}^{T})_{ak} + \lambda_{\matr{W}}C_{row}^{+}(\matr{W}_{ab})}$
			\ENDFOR
			\STATE // Update $\matr{H}$
			\FOR{$b = 1$ \TO $M$}
			\STATE Calculate $C_{col}^{+}(\matr{H}_{ab})$ and $C_{col}^{-}(\matr{H}_{ab})$ as Eq. (\ref{CW})	
			\STATE $\matr{H}_{kb} \leftarrow \matr{H}_{kb}\frac{(\matr{W}^{T}\matr{V})_{kb} + \lambda_{\matr{H}} C_{col}^{-}(\matr{H}_{ab})}{(\matr{W}^{T}\matr{W}\matr{H})_{kb} + \lambda_{\matr{H}} C_{col}^{-}(\matr{H}_{ab})}$
			\ENDFOR
			\ENDFOR
			\ENDWHILE
		\end{algorithmic}
	\end{algorithm}
	
	\begin{algorithm}
		\caption{Relative Pairwise Relationship constrained Non-negative Matrix Factorisation (RPR-NMF) using Divergence measure}
		\label{PRNMF_div}
		\begin{algorithmic}[1]
			\REQUIRE $\matr{V} \in \mathbb{R}^{N \times M}_{\geqslant 0}$ (factorising matrix),\\
			$K \in \mathbb{N}$ (latent dimension),\\
			$\matr{L}_{\matr{W}} \in \mathbb{N}^{l_{\matr{W}} \times 3}$ (pairwise relationship constraints on $\matr{W}$),\\
			$\matr{L}_{\matr{H}} \in \mathbb{N}^{l_{\matr{H}} \times 3}$ (pairwise relationship constraints on $\matr{H}$),\\
			$\lambda_{\matr{W}} \in \mathbb{R}^{+}, \lambda_{\matr{H}} \in \mathbb{R}^{+}$ (penalty coefficients)\\
			\ENSURE $\matr{W} \in \mathbb{R}^{N \times K}_{\geqslant 0}$ (the left factorised matrix),\\
			$\matr{H} \in \mathbb{R}^{K \times M}_{\geqslant 0}$ (the right factorised matrix)\\
			\STATE Initialise $\matr{W}$ and $\matr{H}$ as non-negative random matrices
			\WHILE{Terminal conditions not satisfied}
			\FOR{$k = 1$ \TO $K$}
			\STATE // Update $\matr{W}$
			\FOR{$a = 1$ \TO $N$}
			\STATE Calculate $P_{row}(\matr{W}_{ab})$ as Eq.(\ref{PW})
			\IF{$\frac{1}{2}\lambda_{\matr{W}}P_{row}(\matr{W}_{ab}) + \sum_{j}\matr{H}_{kj} < 0$}
			\STATE $\matr{W}_{ak} \leftarrow \matr{W}_{ak}\frac{\sum_{j}\matr{V}_{aj}\matr{H}_{kj}/(\matr{W}\matr{H})_{aj}}{\sum_{j}\matr{H}_{kj}}$
			\ELSE
			\STATE $\matr{W}_{ak} \leftarrow \matr{W}_{ak}\frac{\sum_{j}\matr{V}_{aj}\matr{H}_{kj}/(\matr{W}\matr{H})_{aj}}{\frac{1}{2}\lambda_{\matr{W}}P_{row}(\matr{W}_{ab}) + \sum_{j}\matr{H}_{kj}}$
			\ENDIF
			\ENDFOR
			\STATE // Update $\matr{H}$
			\FOR{$b = 1$ \TO $M$}
			\STATE Calculate $P_{col}(\matr{H}_{ab})$ as Eq.(\ref{PW})
			\IF{$\frac{1}{2}\lambda_{\matr{H}}P_{col}(\matr{H}_{ab}) + \sum_{i}\matr{W}_{ik} < 0$}
			\STATE $\matr{H}_{kb} \leftarrow \matr{H}_{kb}\frac{\sum_{i}\matr{V}_{ib}\matr{W}_{ik}/(\matr{W}\matr{H})_{ib}}{\sum_{i}\matr{W}_{ik}}$
			\ELSE	
			\STATE $\matr{H}_{kb} \leftarrow \matr{H}_{kb}\frac{\sum_{i}\matr{V}_{ib}\matr{W}_{ik}/(\matr{W}\matr{H})_{ib}}{\frac{1}{2}\lambda_{\matr{H}}P_{col}(\matr{H}_{ab}) + \sum_{i}\matr{W}_{ik}}$
			\ENDIF
			\ENDFOR
			\ENDFOR
			\IF{Objective function value increases}
			\STATE $\lambda_{\matr{W}} \leftarrow 1/2 \cdot \lambda_{\matr{W}}, \quad \lambda_{\matr{H}} \leftarrow 1/2 \cdot \lambda_{\matr{H}}$
			\STATE Roll back
			\ELSE
			\STATE $\lambda_{\matr{W}} \leftarrow 1.01 \cdot \lambda_{\matr{W}}, \quad \lambda_{\matr{H}} \leftarrow 1.01 \cdot \lambda_{\matr{H}}$
			\ENDIF 
			\ENDWHILE
		\end{algorithmic}
	\end{algorithm}
	
	\section{Experiments}
	
	In this section, we conducted experiments on both synthetic datasets and real datasets to demonstrate the performance of our proposed algorithm RPR-NMF. The baseline algorithms we chose for comparison are: NMF \cite{LeeNIPS2001}, GNMF \cite{CaiPAMI2011}, and LCNMF \cite{LiuPAMI2012}. Each of the algorithms have two versions: one with Euclidean measure (denoted by suffix ``\_euc'' in figures and tables), the other with Divergence measure (denoted by suffix ``\_div''). As we stated in the previous section that GNMF and LCNMF are both designed as dimensionality reduction methods, the weight matrix for GNMF only performs as constraints imposed on the right factorised matrix and the label matrix for LCNMF also only affects the right factorised matrix. Thus in our experiments on synthetic datasets as well as datasets for image clustering, RPR-NMF only impose RPR constraints on the right factorised matrix. However, when it comes down to recommender systems, we impose constraints on both factorised matrices which are considered as users' and items' features.
	
	The metrics we used to evaluate the performance of algorithms are:
	
	1. \textit{Mean Squared Loss (MSL)} for algorithms using Euclidean measure and \textit{Mean Divergence (MD)} for algorithms using Divergence measure, which evaluate the approximation performance and are defined as
	\begin{eqnarray}
	\text{MSL} & = & \frac{1}{NM}\lVert \matr{V} - \matr{W}^{\ast}\matr{H}^{\ast} \rVert_{F}^{2},\\
	\text{MD} & = & \frac{1}{NM}D(\matr{V}||\matr{W}^{\ast}\matr{H}^{\ast}),
	\end{eqnarray}
	where $\matr{W}^{\ast}$ and $\matr{H}^{\ast}$ are the final factorised matrices.
	
	2. \textit{Constraint Satisfied Rate (CSR)}, which presents how well the RPR constraints are satisfied and is defined as
	\begin{equation}
	\text{CSR} = \frac{1}{2}(l_{\matr{W}}^{\ast}/l_{\matr{W}}+l_{\matr{H}}^{\ast}/l_{\matr{H}}),
	\end{equation}
	where $l_{\matr{W}}^{\ast}$ and $l_{\matr{H}}^{\ast}$ are the numbers of satisfied constraints.
	
	3. \textit{Clustering Accuracy (ACC)}, which we calculate by constructing a cost matrix, solving the matrix by Munkres Assign Algorithm \cite{MunkresSIAM1957}, and mapping one clustering to the other; and \textit{Normalised Mutual Information (NMI)} \cite{XuSIGIR2003}, which is to calculate the entropy information correlation of clusterings. They are used to evaluate the performance of the clustering results in image clustering experiments.
	
	4. \textit{Root Mean Squared Error (RMSE)} \cite{BarnstonWF1992}, which evaluates the difference between recovering ratings and original ratings (the latter are removed before factorisation in cross validation); and \textit{F1 score} \cite{PowersBP2011}, which tells how much proportion of correct recommendations the recovered rating matrix gives. They are used to evaluate the performance on recommendation systems and defined as
	\begin{equation}
	\text{RMSE} = (\sum_{ij}\matr{M}_{ij})^{-1/2}\lVert \matr{M} \ast (\matr{V} - \matr{W}^{\ast}\matr{H}^{\ast}) \rVert_{F},
	\end{equation}
	\begin{equation}
	\text{F1} = \frac{2 \times \text{Precision} \times \text{Recall}}{\text{Precision} + \text{Recall}},
	\end{equation}
	where $\matr{M}$ is a binary matrix in which ones denote the chosen entries and zeros denote not chosen entries in a cross validation experiment. As for each user, we use its average rating as a threshold. Ratings greater than the threshold suggest the corresponding items are recommended. Thus we obtain two recommendation binary vectors before and after factorisation, which can be used to calculate the value of Precision and Recall.
	
	We first introduce two algorithms used to convert the additional information. Then we validate the effectiveness of RPR-NMF and analyse the computing complexity through two synthetic experiments, followed by experiments on real datasets for applications of image clustering and recommender systems. The statistics of the datasets we used are presented in TABLE \ref{st}.
	
	All the code of RPR-NMF as well as preprocessed datasets can be downloaded from: https://github.com/shawn-jiang/RPRNMF.
	
	\begin{table}[!htbp]
		\caption{Statistics of Datasets used for experiments on Effectiveness Validation, Image Clustering and Recommender Systems}
		\begin{center}
			\label{st}
			\begin{tabular}{|c|c|c|c|c|}
				\hline
				dataset & rows & columns & non-zeros & density\\
				\hline
				Synthetic 1 & 100 & 100 & 10,000 & 1\\
				Synthetic 2 & $[20,200]$ & $[20,200]$ & $[400,40000]$ & 1\\
				\hline
				AT\&T ORL & 1,024 & 400 & 409,600 & 1\\
				CMU PIE & 1,024 & 2,856 & 2,924,544 & 1\\
				Movielens 1M & 6,040 & 3,706 & 1,000,209 & 0.0447\\
				\hline
			\end{tabular}
		\end{center}
	\end{table}
	
	\subsection{Additional Information Conversion}
	Since GNMF, LCNMF and RPR-NMF all need additional information besides the factorising matrix itself, it is necessary to make it fair for all the three algorithms to equally access available additional information.
	
	\begin{algorithm}
		\caption{Constructing Weight Matrix for GNMF from Pairwise Relationship Constraints}
		\label{smx}
		\begin{algorithmic}[1]
			\REQUIRE $M \in \mathbb{N}^{+}$ (the column dimension of $\matr{H}$),\\
			$\matr{L}_{\matr{H}} \in \mathbb{N}^{l_{\matr{H}} \times 3}$ (pairwise relationship constraints on $\matr{H}$),\\
			$mins, maxs \in \mathbb{R}$ (minimal and maximal weight to set)
			\ENSURE $\matr{S}_{\matr{H}} \in \mathbb{R}^{M \times M}_{\geqslant 0}$ (weight matrix for GNMF)
			\STATE $\matr{S}_{\matr{H}} \leftarrow \matr{I}_{M}$
			\STATE Construct a network initialised with zero node
			\FOR{$l = 1$ \TO $l_{\matr{H}}$}
			\STATE $node_{1} \leftarrow (\matr{L}_{\matr{H}}(l,1),\matr{L}_{\matr{H}}(l,2))$
			\STATE $node_{2} \leftarrow (\matr{L}_{\matr{H}}(l,1),\matr{L}_{\matr{H}}(l,3))$
			\STATE Put the two nodes into the network (if they have not been there) with an directed edge from $node_{1}$ to $node_{2}$
			\ENDFOR
			\STATE Recursively calculate the depth of each node (nodes with no outgoing edges are set depth $1$, their direct parents are set depth $2$, and so on).
			\STATE $t \leftarrow (maxs - mins) / (\max(depth) - 1)$
			\FOR{each node $n$}
			\STATE $i,j \leftarrow$ two points of $n$
			\STATE $\matr{S}_{\matr{H}}(i,j) \leftarrow mins + (depth(n) - 1) * t$
			\ENDFOR
		\end{algorithmic}
	\end{algorithm}
	
	\begin{algorithm}
		\caption{Constructing Label Matrix for LCNMF from Pairwise Relationship Constraints}
		\label{lmx}
		\begin{algorithmic}[1]
			\REQUIRE $M \in \mathbb{N}^{+}$ (the column dimension of $\matr{H}$),\\
			$\matr{L}_{\matr{H}} \in \mathbb{N}^{l_{\matr{H}} \times 3}$ (pairwise relationship constraints on $\matr{H}$)
			\ENSURE $\matr{B}$ (binary matrix whose size is at most $M \times M$ but not determined)
			\STATE $\matr{B} \leftarrow \matr{0}_{M}$
			\FOR{$l = 1$ \TO $l_{\matr{H}}$}
			\STATE $q \leftarrow \matr{L}_{\matr{H}}(l,1)$ \quad $r \leftarrow \matr{L}_{\matr{H}}(l,2)$
			\IF{$\matr{B}_{:,q} \neq \bm{0} \land \matr{B}_{:,r} \neq \bm{0}$}
			\STATE $\matr{B}_{:,q} \leftarrow \matr{B}_{:,q} \lor \matr{B}_{:,r}$
			\STATE $\matr{B}_{:,r} \leftarrow \bm{0}$
			\ELSIF{$\matr{B}_{:,q} \neq \bm{0}$}
			\STATE $label(r) \leftarrow label(q)$
			\ELSIF{$\matr{B}_{:,r} \neq \bm{0}$}
			\STATE $label(q) \leftarrow label(r)$
			\ELSE
			\STATE $q,r \leftarrow$ new label
			\ENDIF
			\ENDFOR
			\STATE Delete rows containing all zeros from $\matr{B}$
		\end{algorithmic}
	\end{algorithm}
	
	GNMF needs a weight matrix which describes how close the data vectors should be, LCNMF requires which data vectors should have the same label, while RPR-NMF demands a list of RPR constraints among factorised vectors. These three additional information can be converted from each to the others. However, their intensities are different: labels of LCNMF are the strongest, followed by the similarities of GNMF, and the RPR constraints of RPR-NMF are the weakest. It will cause loss of information if we convert stronger constraints to weaker ones, but not if it is done the other way around. Thus here we introduce two algorithms which convert the RPR constraints for RPR-NMF to the weight matrix for GNMF (Algorithm \ref{smx}) and the label matrix for LCNMF (Algorithm \ref{lmx}) respectively.
	
	\subsection{Effectiveness Validation \& Complexity Analysis}
	We conducted two experiments on synthetic datasets to verify the effectiveness and study the computing complexity of RPR-NMF comparing to the baseline algorithms.
	
	It is worth noting that, the RPR constraints in the experiments are randomly generated but there are chains among them. A chain of constraints is like $dis(\bm{a},\bm{b}) < dis(\bm{b},\bm{c}) < dis(\bm{c},\bm{d})$, which is a $3$-chain of constraints. As we are going to show in the results, RPR-NMF can satisfy chain constraints while others are incapable.
	
	\begin{figure*}[htbp]
		\begin{subfigure}[t]{0.32\linewidth}
			\begin{center}
				\includegraphics[width=\textwidth]{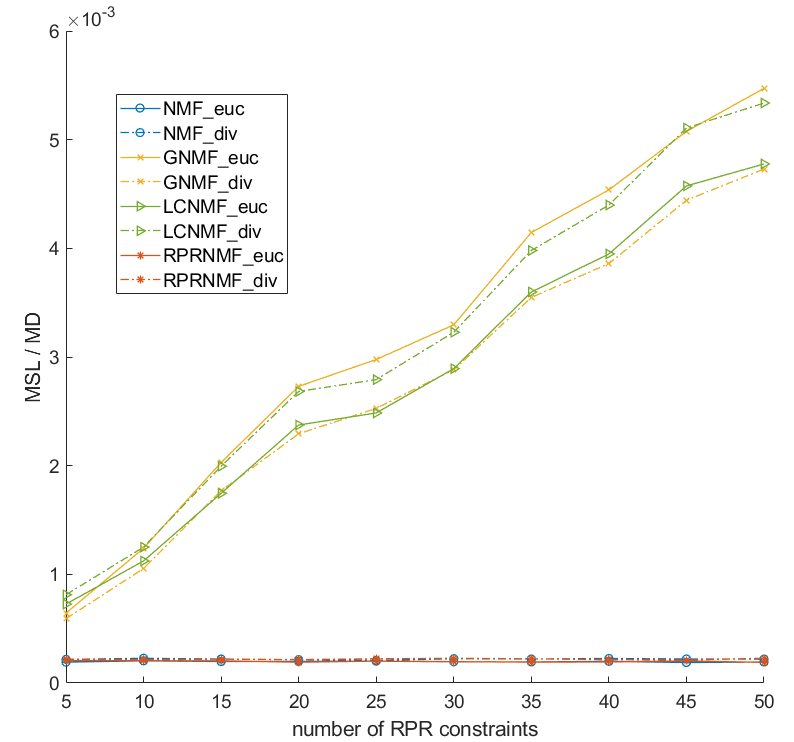}
				\caption{}
			\end{center}
		\end{subfigure}
		\hspace{1ex}
		\begin{subfigure}[t]{0.32\linewidth}
			\begin{center}
				\includegraphics[width=\textwidth]{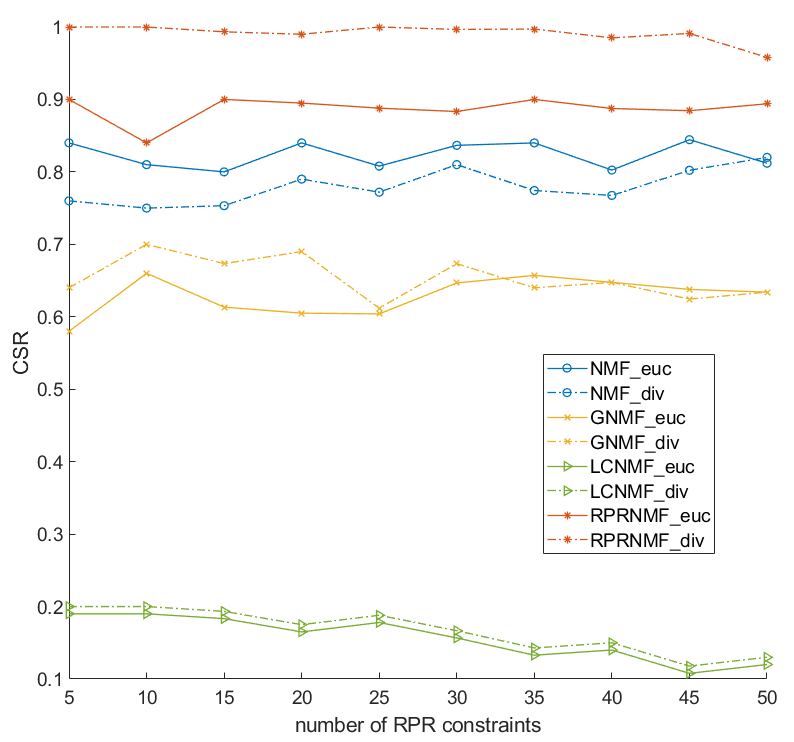}
				\caption{}
			\end{center}
		\end{subfigure}
		\hspace{1ex}
		\begin{subfigure}[t]{0.32\linewidth}
			\begin{center}
				\includegraphics[width=\textwidth]{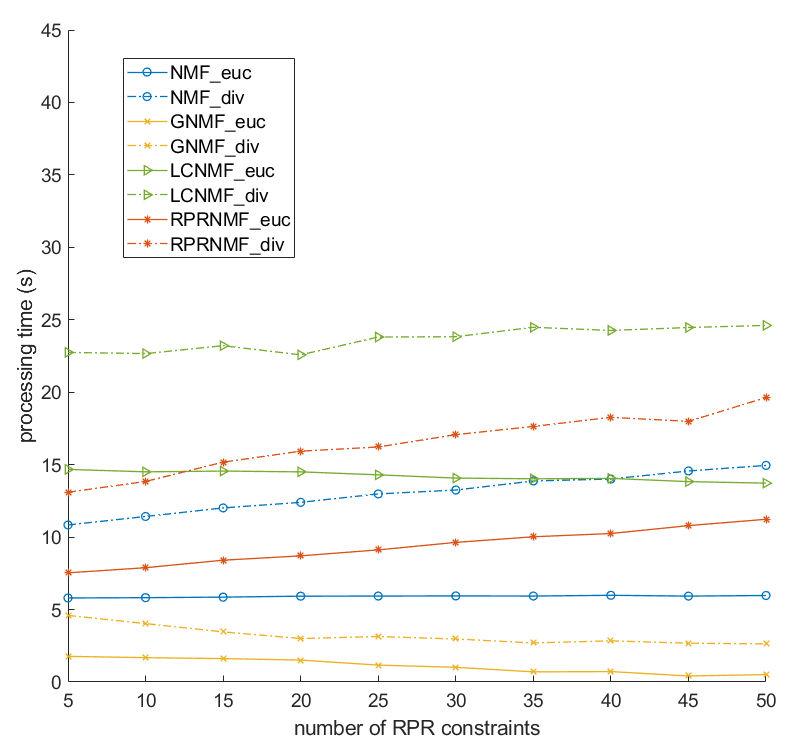}
				\caption{}
			\end{center}
		\end{subfigure}
		\caption{Performance with respect to the number of pairwise relationship constraints. (a) MSL/MD, (b) CSR, (c) Processing time. \label{syn1}}
	\end{figure*}
	
	\begin{figure*}[htbp]
		\begin{subfigure}[t]{0.32\linewidth}
			\begin{center}
				\includegraphics[width=\textwidth]{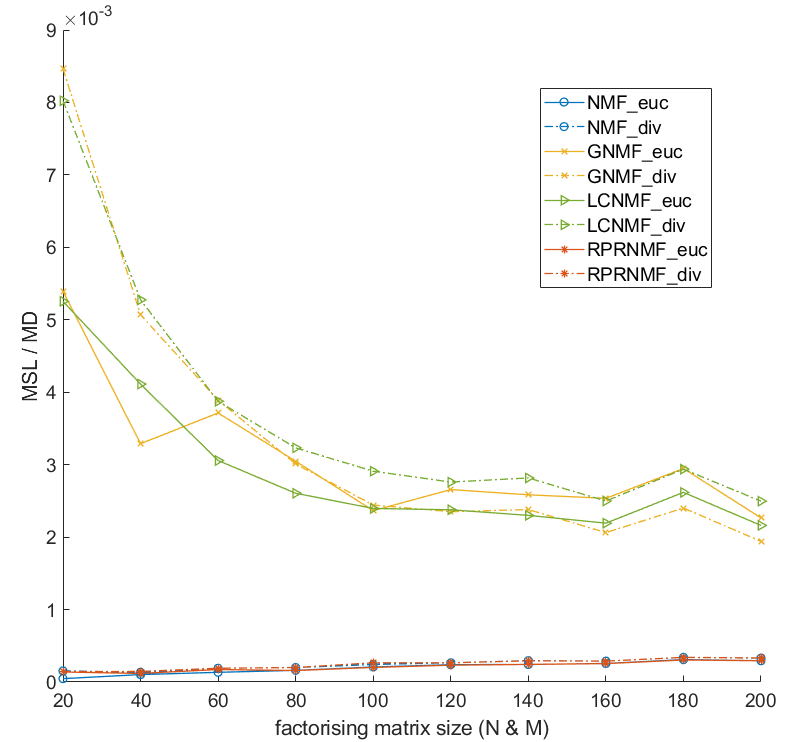}
				\caption{}
			\end{center}
		\end{subfigure}
		\hspace{1ex}
		\begin{subfigure}[t]{0.32\linewidth}
			\begin{center}
				\includegraphics[width=\textwidth]{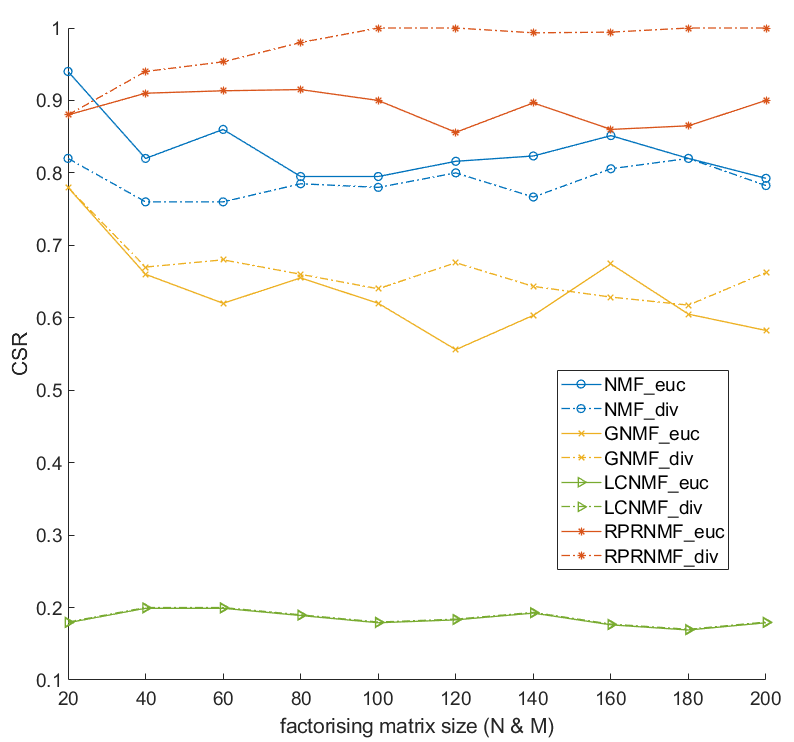}
				\caption{}
			\end{center}
		\end{subfigure}
		\hspace{1ex}
		\begin{subfigure}[t]{0.32\linewidth}
			\begin{center}
				\includegraphics[width=\textwidth]{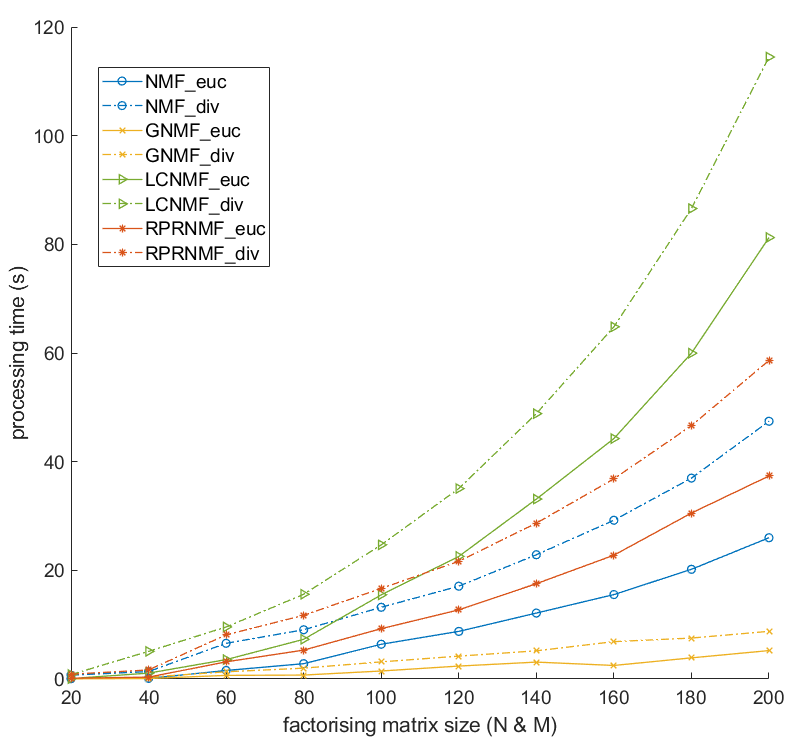}
				\caption{}
			\end{center}
		\end{subfigure}
		\caption{Performance with respect to the size of factorising matrix. (a) MSL/MD, (b) CSR, (c) Processing time. \label{syn2}}
	\end{figure*}
	
	In the first experiment, we explored the influence of the number of RPR constraints with respect to MSL/MD, CSR and processing time. First, we randomly generated a $100 \times 20$ matrix $\matr{W}_{0}$ and a $20 \times 100$ matrix $\matr{H}_{0}$, and got the factorising matrix $\matr{V}$ by their product. Then we generate $1$ to $10$ groups of $5$-chain constraints (i.e. $5$ to $50$ constraints) from $\matr{H}_{0}$. $K$ is set $20$ and $\lambda_{\matr{H}}$ is set $1$. For each experiment with different number of constraints, we repeated it $10$ times and  present the average results in Fig. \ref{syn1}.
	
	As the number of constraints increases, (a) NMF and RPR-NMF algorithms obtain similar and the lowest errors (average $1.963 \times 10^{-4}$ MSL and $2.197 \times 10^{-4}$ MD for NMF, $1.984 \times 10^{-4}$ MSL and $2.195 \times 10^{-4}$ MD for RPR-NMF), while the errors of GNMF and LCNMF are increasing and much higher than those of NMF and RPR-NMF; (b) RPR-NMF using Divergence measure achieves the highest CSRs (average $99.11\%$) followed by its Euclidean version ($88.72\%$), and NMF, GNMF, LCNMF obtain $80.16\%$, $64.10\%$, $16.14\%$ average CSR respectively; (c) the processing time of RPR-NMF and the Euclidean version of NMF is increasing while others' is nearly unchanged or slightly decreasing, and GNMF are the fastest while LCNMF are the slowest when comparing algorithms solely with Euclidean measure or Divergence measure.
	
	The second experiment focused on how the size of factorising matrix affects the performance of different algorithms. The size of the factorising matrix $\matr{V}$ varies from $20$ to $200$ with step $20$. Each $\matr{V}$ is also obtained by the product of randomly generated $\matr{W}_{0}$ and $\matr{H}_{0}$. For each matrix size, the latent dimension $K$ is set $N / 5$, and the number of constraints equals to $K$. The constraints are also of 5-chains. $\lambda_{\matr{H}}$ is set $1$.The average results of $10$ repeated experiments are presented in Fig. \ref{syn2}.
	
	As the size of factorising matrix increases, (a) NMF and RPR-NMF algorithms obtain similar and the lowest errors again with slight rises, while the errors of GNMF and LCNMF are decreasing but still much higher than those of NMF and RPR-NMF; (b) the performance on CSR is similar to that in the first experiment, where RPR-NMF using Divergence measure achieves the highest score; (c) the processing time of all algorithms is increasing, and GNMF are still the fastest while LCNMF are still the slowest respectively for Euclidean measure and Divergence measure.
	
	The results on CSR also suggest that the weight matrix of GNMF and the labels of LCNMF are not helpful on satisfying such RPR constraints. When there are only independent constraints, GNMF and LCNMF can satisfy them by setting proper weights and labels (e.g. for a constraint $dis(\bm{a},\bm{b}) < dis(\bm{b},\bm{c})$, GNMF sets weight $1$ for ($\bm{a}, \bm{b}$) and $0$ for ($\bm{b}, \bm{c}$), and LCNMF sets ($\bm{a}, \bm{b}$) the same label). However, when the constraints are not independent (such as in chains), GNMF cannot simply set the weights $1$s and $0$s, and LCNMF can only guarantee satisfying one constraint of a group of chain constraints (e.g. for a 3-chain of constraints $dis(\bm{a},\bm{b}) < dis(\bm{b},\bm{c}) < dis(\bm{c},\bm{d})$, the weight of ($\bm{b},\bm{c}$) in GNMF has to be greater than $0$ and less than $1$, while in LCNMF, setting ($\bm{a},\bm{b}$) the same label can satisfy the first constraint, but the only way to satisfy the second constraint is to set ($\bm{b},\bm{c}$) the same label, which will contradict with the first constraint). Thus, GNMF and LCNMF cannot guarantee most of the constraints are satisfied after factorisation (recall the example in Fig. \ref{eg}).
	
	The above two synthetic experiments both demonstrate that RPR-NMF algorithms have the advantages of getting accurate factorisation, satisfying relative constraints and being applied to large scale datasets comparing with other algorithms.
	
	\subsection{Parameter Selection}
	The parameters introduced by RPR-NMF are the penalty coefficients. We conducted experiments with varying values of these parameters under the following settings: $N = 100, M = 100, K = 20, l_{\matr{W}} = l_{\matr{H}} = N / 10$. $\lambda_{\matr{W}}$ and $\lambda_{\matr{H}}$ vary from $0.4$ to $4$ with step $0.4$ and from $20$ to $100$ with step $20$. The average results of $10$ repetitions are presented in Fig. \ref{la}.
	
	As showed in the figure, the coefficients have nearly no influence on the CSR for both algorithms. However, the MSL/MD of RPR-NMF using Euclidean measure increases when the parameters become larger, while the Divergence version is stable all the time. This suggests that RPR-NMF is not sensitive to its parameters which can be set properly without much effort.
	
	\begin{figure}[htbp]
		\begin{center}
			\includegraphics[width=0.36\textwidth]{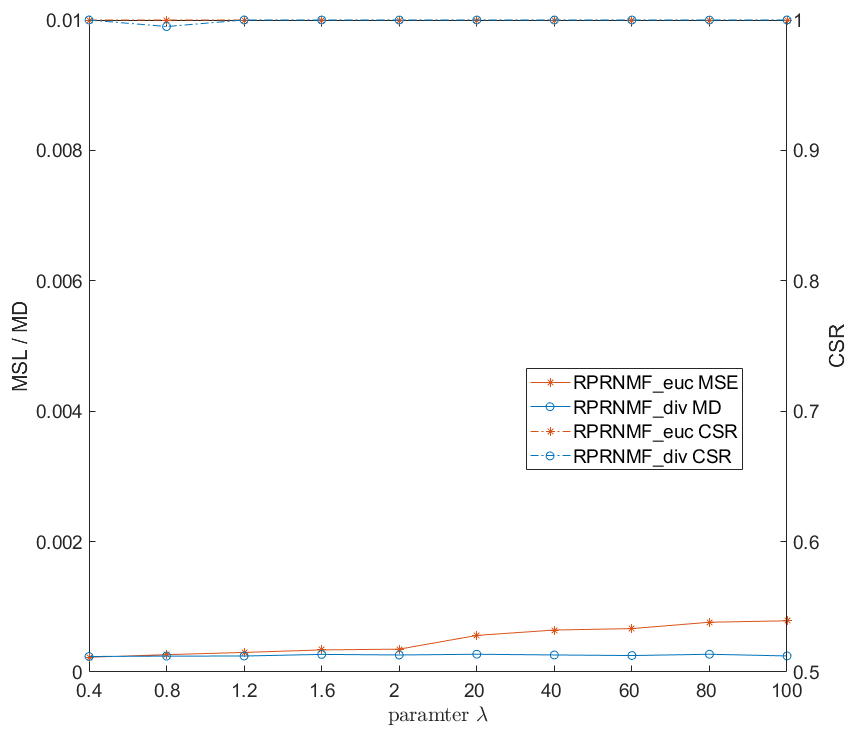}
			\caption{MSL/MD \& CSR of RPR-NMF algorithms with penalty coefficients varying from $0.4$ to $2$ and from $20$ to $100$.\label{la}}
		\end{center}
	\end{figure}
	
	\subsection{Performance Analysis in Image Clustering}
	We validate the performance of algorithms in image clustering on two public datasets with ground truth: AT\&T ORL and CMU PIE.
	
	\begin{table*}[!htbp]
		\setlength{\abovecaptionskip}{0pt} 
		\setlength{\belowcaptionskip}{0pt}
		\caption{Clustering Performance on AT\&T ORL Database (Euclidean / Divergence)}
		\begin{center}
			\label{orl}
			\begin{tabular}{|c|c||c|c|c|c||c|c|c|c|}
				\hline
				\multirow{2}{*}{$K$} & \multirow{2}{*}{$l_{\matr{H}}$} & \multicolumn{4}{c||}{MSL / MD} & \multicolumn{4}{c|}{CSR (\%)}\\
				\cline{3-10}
				& & NMF & GNMF & LCNMF & RPR-NMF & NMF & GNMF & LCNMF & RPR-NMF\\
				\hline
				5 & 10 & \textbf{339.9} / \textbf{1.604} & 397.9 / 1.980 & 355.1 / 1.677 & 340.7 / 1.606 & 90.00 / 86.00 & 98.00 / \textbf{100.0} & \textbf{100.0} / \textbf{100.0} & 98.00 / 96.00 \\
				10 & 20 & 263.2 / 1.200 & 304.5 / 1.589 & 288.8 / 1.294 & \textbf{261.7} / \textbf{1.197} & 88.00 / 87.00 & 98.00 / \textbf{100.0} & \textbf{100.0} / \textbf{100.0} & 94.00 / \textbf{100.00} \\
				20 & 40 & 253.7 / 1.155 & 267.2 / 1.610 & 284.1 / 1.293 & \textbf{252.6} / \textbf{1.147} & 92.00 / 83.50 & 91.00 / \textbf{100.0} & \textbf{100.0} / \textbf{100.0} & 91.50 / \textbf{100.00} \\
				30 & 60 & 228.6 / \textbf{1.039} & 233.4 / 1.494 & 265.5 / 1.207 & \textbf{228.1} / 1.041 & 91.00 / 89.00 & 92.33 / \textbf{100.0} & \textbf{100.0} / \textbf{100.0} & 97.00 / \textbf{100.00} \\
				40 & 80 & 211.0 / 0.960 & \textbf{209.2} / 1.400 & 247.1 / 1.116 & 210.9 / \textbf{0.957} & 91.25 / 89.25 & 93.50 / \textbf{100.0} & \textbf{100.0} / \textbf{100.0} & 97.00 / \textbf{100.00} \\
				\hline
				\multicolumn{2}{|c||}{Avg.} & 259.3 / 1.192 & 282.4 / 1.615 & 288.1 / 1.317 & \textbf{258.8} / \textbf{1.190} & 90.45 / 86.95 & 94.57 / \textbf{100.0} & \textbf{100.0} / \textbf{100.0} & 95.50 / 99.20 \\
				\hline
				\hline
				\multirow{2}{*}{$K$} & \multirow{2}{*}{$l_{\matr{H}}$} & \multicolumn{4}{c||}{ACC (\%)} & \multicolumn{4}{c|}{NMI (\%)}\\
				\cline{3-10}
				& & NMF & GNMF & LCNMF & RPR-NMF & NMF & GNMF & LCNMF & RPR-NMF\\
				\hline
				5 & 10 & 78.40 / 76.40 & 67.20 / 65.60 & 89.20 / 85.20 & \textbf{90.40} / \textbf{88.40} & 73.37 / 79.29 & 59.01 / 51.76 & 85.33 / \textbf{83.03} & \textbf{85.35} / 82.05 \\
				10 & 20 & 64.20 / 68.00 & 63.40 / 55.60 & 75.00 / 71.20 & \textbf{78.80} / \textbf{71.80} & 73.28 / 74.10 & 67.69 / 58.00 & 81.24 / 77.89 & \textbf{82.12} / \textbf{78.40} \\
				20 & 40 & 64.10 / 63.50 & 62.10 / 53.30 & 64.80 / 70.40 & \textbf{72.20} / \textbf{68.00} & 77.11 / 76.33 & 73.57 / 60.59 & 79.06 / \textbf{81.79} & \textbf{82.44} / 78.19 \\
				30 & 60 & 62.00 / 60.93 & 59.33 / 48.47 & 66.20 / 63.73 & \textbf{67.93} / \textbf{64.67} & 77.92 / 76.80 & 74.92 / 59.82 & 80.12 / 78.85 & \textbf{80.34} / \textbf{78.89} \\
				40 & 80 & 61.25 / 58.25 & 56.95 / 46.85 & 62.95 / \textbf{59.20} & \textbf{63.50} / 58.10 & 79.01 / 77.65 & 75.62 / 60.39 & 79.22 / \textbf{78.01} & \textbf{79.29} / 75.71 \\
				\hline
				\multicolumn{2}{|c||}{Avg.} & 65.99 / 65.42 & 61.80 / 53.96 & 71.63 / 69.95 & \textbf{74.57} / \textbf{70.19} & 76.14 / 76.83 & 70.16 / 58.11 & 80.99 / \textbf{79.91} & \textbf{81.91} / 78.65 \\
				\hline
			\end{tabular}
		\end{center}
	\end{table*}
	
	\begin{table*}[!htbp]
		\setlength{\abovecaptionskip}{0pt} 
		\setlength{\belowcaptionskip}{0pt}
		\caption{Clustering Performance on CMU PIE Database (Euclidean / Divergence)}
		\begin{center}
			\label{pie}
			\begin{tabular}{|c|c||c|c|c|c||c|c|c|c|}
				\hline
				\multirow{2}{*}{$K$} & \multirow{2}{*}{$l_{\matr{H}}$} & \multicolumn{4}{c||}{MSL / MD} & \multicolumn{4}{c|}{CSR (\%)}\\
				\cline{3-10}
				& & NMF & GNMF & LCNMF & RPR-NMF & NMF & GNMF & LCNMF & RPR-NMF\\
				\hline
				10 & 120 & \textbf{203.1} / \textbf{1.788} & 203.8 / 2.916 & 429.8 / 3.234 & 210.1 / 1.852 & 84.00 / 78.67 & 80.50 / 68.83 & 00.00 / 00.00 & \textbf{84.17} / \textbf{82.83} \\
				20 & 240 & 171.6 / \textbf{1.385} & 163.2 / 2.689 & 425.3 / 3.063 & \textbf{161.4} / 1.478 & 82.83 / \textbf{73.25} & 82.25 / 70.58 & 00.00 / 00.00 & \textbf{85.42} / 71.75 \\
				30 & 360 & 151.4 / \textbf{1.292} & \textbf{140.6} / 2.447 & 397.4 / 2.944 & 141.5 / 1.370 & 82.17 / 72.89 & 83.22 / 68.28 & 00.00 / 00.00 & \textbf{86.78} / \textbf{76.83} \\
				40 & 480 & 137.3 / \textbf{1.147} & \textbf{122.9} / 2.282 & 382.4 / 2.793 & 125.2 / 1.219 & 83.13 / 76.04 & 84.46 / 70.58 & 00.00 / 00.00 & \textbf{88.08} / \textbf{77.29} \\
				50 & 600 & 130.0 / \textbf{1.088} & \textbf{113.7} / 2.217 & 385.8 / 2.796 & 118.1 / 1.139 & 82.50 / \textbf{76.83} & 84.27 / 67.53 & 00.00 / 00.00 & \textbf{88.43} / 75.60 \\
				60 & 720 & 122.3 / \textbf{1.013} & \textbf{105.5} / 2.124 & 378.2 / 2.731 & 110.4 / 1.102 & 81.47 / 74.92 & 83.69 / 67.67 & 00.00 / 00.00 & \textbf{87.83} / \textbf{75.50} \\
				68 & 816 & 116.9 / \textbf{0.978} & \textbf{98.83} / 2.037 & 370.8 / 2.693 & 103.5 / 0.998 & 81.30 / \textbf{74.34} & 84.14 / 68.70 & 00.00 / 00.00 & \textbf{88.51} / 71.57 \\
				\hline
				\multicolumn{2}{|c||}{Avg.} & 147.5 / \textbf{1.242} & \textbf{135.5} / 2.387 & 395.7 / 2.893 & 138.6 / 1.308 & 82.49 / 75.28 & 83.22 / 68.88 & 00.00 / 00.00 & \textbf{87.03} / \textbf{75.91} \\
				\hline
				\hline
				\multirow{2}{*}{$K$} & \multirow{2}{*}{$l_{\matr{H}}$} & \multicolumn{4}{c||}{ACC (\%)} & \multicolumn{4}{c|}{NMI (\%)}\\
				\cline{3-10}
				& & NMF & GNMF & LCNMF & RPR-NMF & NMF & GNMF & LCNMF & RPR-NMF\\
				\hline
				10 & 120 & 64.86 / 63.19 & 51.76 / 53.71 & 60.29 / 57.52 & \textbf{66.43} / \textbf{66.76} & \textbf{67.13} / \textbf{67.10} & 57.73 / 52.70 & 55.78 / 55.98 & 67.01 / 64.38 \\
				20 & 240 & 58.00 / 62.57 & 60.79 / 51.10 & 56.88 / 57.05 & \textbf{67.36} / \textbf{66.64} & 68.54 / \textbf{72.07} & 68.63 / 59.70 & 60.88 / 61.87 & \textbf{73.61} / 71.00 \\
				30 & 360 & 62.49 / 61.89 & 58.21 / 53.51 & 57.87 / 57.16 & \textbf{66.32} / \textbf{66.73} & 71.87 / \textbf{72.60} & 70.37 / 63.14 & 64.71 / 63.66 & \textbf{75.06} / 72.55 \\
				40 & 480 & 61.73 / 59.35 & 60.23 / 54.20 & 56.25 / 54.13 & \textbf{64.51} / \textbf{66.17} & 73.15 / 73.01 & 72.66 / 63.26 & 64.40 / 65.35 & \textbf{73.94} / \textbf{74.63} \\
				50 & 600 & 60.63 / 61.62 & 58.98 / 52.15 & 56.77 / 56.17 & \textbf{64.49} / \textbf{66.24} & 73.85 / 74.40 & 73.41 / 64.53 & 66.48 / 65.06 & \textbf{75.67} / \textbf{74.96} \\
				60 & 720 & 61.36 / 60.09 & 57.06 / 53.97 & 55.23 / 53.38 & \textbf{62.31} / \textbf{66.48} & \textbf{75.47} / 74.49 & 73.35 / 66.32 & 66.68 / 66.12 & 75.33 / \textbf{75.05} \\
				68 & 816 & 57.75 / 60.78 & 58.60 / 51.51 & 52.72 / 53.79 & \textbf{62.40} / \textbf{63.10} & 73.32 / \textbf{75.25} & 74.18 / 66.31 & 66.38 / 65.65 & \textbf{75.72} / 74.21 \\
				\hline
				\multicolumn{2}{|c||}{Avg.} & 60.97 / 61.36 & 57.95 / 52.88 & 56.57 / 55.60 & \textbf{64.83} / \textbf{66.02} & 71.90 / \textbf{72.70} & 70.05 / 62.28 & 63.62 / 63.38 & \textbf{73.76} / 72.40 \\
				\hline
			\end{tabular}
		\end{center}
	\end{table*}
	
	\subsubsection{AT\&T ORL Dataset}
	The AT\&T ORL database consists of $400$ images for $40$ classes with $10$ different facial images in each class \cite{SamariaIEEE1994}. The images were taken at different times, lighting and facial expressions. The faces are in an upright position in frontal view, with a slight left-right rotation. Each image is preprocessed into a $32 \times 32$ matrix with $256$ grey levels \cite{LiuPAMI2012}. Thus the size of factorising matrix is $1,024 \times 400$.
	
	We adopt the following steps in this experiment:
	
	i. Randomly choose $K$ classes and mix up images from these classes to form the factorising matrix;
	
	ii. In the $K$ classes, randomly select $2$ images in each class and set them more similar to images chosen in other classes, which forms the list of RPR constraints for RPR-NMF; transform the constraints into a weight matrix and a label matrix; the latent dimension is set as the number of chosen classes $K$;
	
	iii. Run algorithms to obtain the right factorised matrix $\matr{H}$; utilise K-means method on $\matr{H}$ to get the clustering results;
	
	iv. Calculate the ACC and NMI for each algorithm.
	
	The number of class $K$ varies from $2$ to $10$ with step $2$. The penalty coefficients for RPR-NMF using Euclidean measure are set $20$ and for its Divergence version are set $2$. The results are showed in TABLE \ref{orl}.
	
	From the table, RPR-NMF using Euclidean measure achieves the best average ACC ($85.78\%$) while RPR-NMF using Divergence measure achieves the best average NMI ($82.33\%$). Among algorithms using Euclidean measure, RPR-NMF outperforms NMF, GNMF and LCNMF by $17.28\%$, $13.30\%$, $15.51\%$ on ACC and by $19.71\%$, $10.76\%$, $15.27\%$ on NMI respectively; as for algorithms using Divergence measure, RPR-NMF outperforms NMF, GNMF and LCNMF by $17.14\%$, $10.32\%$, $16.76\%$ on ACC and by $19.93\%$, $11.36\%$, $16.37\%$ on NMI respectively. The average improvement of RPR-NMF algorithms is $15.05\%$ on ACC and $15.57\%$ on NMI. All of the algorithms obtain very high CSR, because the images in this dataset are quite distinguishing. Besides, the there is no chain among RPR constraints, thus LCNMF can satisfy all of them by setting proper labels.
	
	\begin{table*}[!htbp]
		\setlength{\abovecaptionskip}{0pt} 
		\setlength{\belowcaptionskip}{0pt}
		\caption{Cross Validation Results on Movielens 1M Dataset (Euclidean / Divergence)}
		\begin{center}
			\label{movielens}
			\begin{tabular}{|c|c||c|c|c|c||c|c|c|c|}
				\hline
				\multirow{2}{*}{$K$} & \multirow{2}{*}{$l_{\matr{W}}$ \& $l_{\matr{H}}$} & \multicolumn{4}{c||}{MSL / MD} & \multicolumn{4}{c|}{CSR (\%)}\\
				\cline{3-10}
				& & NMF & GNMF\_euc & LCNMF & RPR-NMF & NMF & GNMF\_euc & LCNMF & RPR-NMF\\
				\hline
				20 & 300 & \textbf{0.513} / 0.083 & 0.526 & 0.532/ 0.086 & 0.514 / \textbf{0.082} & 49.07 / 49.87 & 87.37 & 50.00 / 50.00 & \textbf{96.17} / \textbf{95.77} \\
				50 & 600 & \textbf{0.373} / \textbf{0.060} & 0.409 & 0.422 / 0.068 & 0.374 / \textbf{0.060} & 49.23 / 51.03 & 89.25 & 50.00 / 50.00 & \textbf{95.13} / \textbf{98.83} \\
				100 & 900 & \textbf{0.249} / \textbf{0.039} & 0.317 & 0.334 / 0.053 & 0.253 / \textbf{0.039} & 50.17 / 51.51 & 89.72 & 49.94 / 49.94 & \textbf{93.49} / \textbf{99.39} \\
				\hline
				\multicolumn{2}{|c||}{Avg.} & \textbf{0.378} / 0.061 & 0.417 & 0.429 / 0.069 & 0.380 / \textbf{0.060} & 49.38 / 50.80 & 88.78 & 49.98 / 49.98 & \textbf{94.93} / \textbf{98.00} \\
				\hline
				\hline
				\multirow{2}{*}{$K$} & \multirow{2}{*}{$l_{\matr{W}}$ \& $l_{\matr{H}}$} & \multicolumn{4}{c||}{RMSE} & \multicolumn{4}{c|}{F1 Score (\%)}\\
				\cline{3-10}
				& & NMF & GNMF\_euc & LCNMF & RPR-NMF & NMF & GNMF\_euc & LCNMF & RPR-NMF\\
				\hline
				20 & 300 & 0.959 / 0.975 & 0.929 & 0.991 / 1.004 & \textbf{0.928} / \textbf{0.974} & 68.75 / 68.67 & 68.29 & 66.39 / 66.23 & \textbf{69.25} / \textbf{68.71} \\
				50 & 600 & 1.027 / 1.045 & \textbf{0.961} & 1.044 / 1.068 & 0.979 / \textbf{1.030} & 67.34 / \textbf{67.20} & 67.45 & 64.78 / 64.80 & \textbf{67.51} / 67.19 \\
				100 & 900 & 1.100 / \textbf{1.147} & \textbf{1.010} & 1.133 / 1.165 & 1.055 / 1.157 & 65.10 / 65.01 & 65.31 & 62.20 / 62.26 & \textbf{65.46} / \textbf{65.03} \\
				\hline
				\multicolumn{2}{|c||}{Avg.} & 1.028 / \textbf{1.055} & \textbf{0.966} & 1.056 / 1.079 & 0.987 / 1.056 & 67.06 / 66.96 & 67.02 & 64.46 / 64.43 & \textbf{67.41} / \textbf{66.98} \\
				\hline
			\end{tabular}
		\end{center}
	\end{table*}
	
	\subsubsection{CMU PIE Dataset}
	CMU PIE database was collected at Carnegie Mellon University in 2000, and it has been very influential in advancing research in face recognition across pose and illumination \cite{GrossIVC2010}. We followed the pre-processed PIE dataset used in \cite{CaiPAMI2011} which contains $2,856$ images for $68$ different people with $42$ images for each person. The images are processed into $32 \times 32$ matrices denoting the grey level of pixels. Thus the size of factorising matrix is $1,024 \times 2,856$.
	
	We adopt similar experimental steps as we did for AT\&T ORL dataset with a few changes on the extraction of RPR constraints. For this experiment, we randomly select $4$ images instead of $2$ in each cluster. Moreover, considering that the similarity can exist not only among intra-class images, but also inter-class images (e.g. the image of a dog is more similar to that of another dog, rather than a cat), we extract constraints in both ways. The number of class $K$ varies from $10$ to $60$ with step $10$. The penalty coefficients for RPR-NMF using Euclidean measure are set $20$ and for its Divergence version are set $2$. The results are presented in TABLE \ref{pie}.
	
	According to the table, we can see that GNMF using Euclidean measure and NMF using Divergence measure achieve the best average approximation ($135.5$ and $1.242$). As for the other three evaluation metrics,  RPR-NMF using Euclidean measure achieves the best average CSR ($87.03\%$) and NMI ($73.76\%$) while RPR-NMF using Divergence measure achieves the best average ACC ($66.02\%$). Notice that the CSR of LCNMF are all zeros because inter-class and intra-class constraints lead to cyclic chain constraints.
	
	Among the algorithms using Euclidean measure, RPR-NMF outperforms NMF, GNMF and LCNMF by $6.40\%$, $11.94\%$, $14.67\%$ on ACC and by $2.66\%$, $5.37\%$, $16.02\%$ on NMI respectively. For the algorithms using Divergence measure, RPR-NMF outperforms NMF, GNMF and LCNMF by $7.59\%$, $24.85\%$, $18.74\%$ on ACC. However, its performance on NMI is a bit lower than NMF, while it outperforms GNMF and LCNMF by $16.25\%$, $14.23\%$ on NMI. The average improvement of RPR-NMF algorithms is $14.03\%$ on ACC and $9.02\%$ on NMI.
	
	\subsection{Performance Analysis in Recommender Systems}
	As for the performance analysis in recommender systems, we compare our algorithm RPR-NMF with NMF, GNMF, and LCNMF on Movielens 1M dataset. For the reason that the rating matrices in recommender systems have missing values which are usually denoted by zeros, all the algorithms have to be modified with a \textit{MASK} matrix for incomplete factorising matrix as proposed in \cite{ZhangSDM2006}. To our best effort, we implemented the modified version for all algorithms except for the GNMF using Divergence measure. In the Appendix of \cite{CaiPAMI2011}, the authors only mentioned how to deal with incomplete factorising matrix for GNMF using Euclidean measure. Thus we did not compare GNMF using Divergence measure in this part.
	
	Specifically, the pairwise relationship constraints in this part are extracted from meta information and are imposed on both factorised matrices: we utilised users' gender, age and occupation as well as movies' genre to obtain RPRs for the Movielens dataset.
	%; for BookCrossing dataset, the constraints are obtained from users' age and books' author information. A part of the constraints have chains.
	
	\subsubsection{Movielens 1M Dataset}
	The Movielens 1M dataset is a well-known stable baseline dataset in recommender systems \cite{HarperTIIS2015}. It has $1,000,209$ ratings ($1$ to $5$) from $6,040$ users on $3,883$ movies. Indeed some movies have no ratings, thus after removing these movies, we derived a pre-processed $6,040 \times 3,706$ rating matrix.
	
	Three groups of cross validation experiments are conducted on this dataset: (1) $300$ constraints, $K=20$; (2) $500$ constraints and $K=50$; (3) $1000$ constraints and $K=100$. The penalty coefficients for RPR-NMF using Euclidean measure are set $200$, $10$ and $1$ respectively, and the coefficients for the Divergence version of RPR-NMF are set $0.1$, $0.01$, $0.001$ respectively. For each group, we conducted $5$ cross validation experiments and the average results are showed in TABLE \ref{movielens}.
	
	As presented in the table, NMF achieves the lowest MSL and RPR-NMF achieves the lowest MD; both of the algorithms obtain close approximation errors while the other two methods, GNMF and LCNMF, have a higher rate of errors. RPR-NMF also achieves the highest CSR on both its versions. As for the recommendation evaluation, GNMF using Euclidean measure achieves the lowest RMSE while RPR-NMF using Euclidean measure achieves the highest F1 score. It is worth noting that, in this experiment, the RPRs are randomly selected through meta information, and some of the constraints may not represent the true ratings' pattern. Thus the RPR-NMF algorithms seem not greatly outperform existing methods. However, their overall performance is still better compared to other alternatives.
	
	\section{Conclusions}
	
	In this paper, we proposed a novel matrix factorisation algorithm called RPR-NMF, to effectively utilise the relative pairwise relationship among rows or columns of factorised matrices. Both of the Euclidean and Divergence measures are used in the objective function. RPR-NMF imposes penalties for each relative pairwise relationship constraint in a form of addition of natural exponential functions for Euclidean measure and hinge loss for Divergence measure. Complete and sufficient proofs of convergence are also provided to ensure that RPR-NMF conforms to the ``multiplicative update rules''. Numerical analysis shows that the proposed algorithm achieves superior performance on both the overall loss and the accuracy of satisfied constraints compared with the other algorithms. Experiments on synthetic and real datasets for image clustering and recommender systems demonstrate the effectiveness of RPR-NMF algorithms which outperform baseline methods on several evaluation criteria.

	\bibliographystyle{IEEEtran}
	\bibliography{arxiv}

% Generated by IEEEtran.bst, version: 1.14 (2015/08/26)
\begin{thebibliography}{10}
\providecommand{\url}[1]{#1}
\csname url@samestyle\endcsname
\providecommand{\newblock}{\relax}
\providecommand{\bibinfo}[2]{#2}
\providecommand{\BIBentrySTDinterwordspacing}{\spaceskip=0pt\relax}
\providecommand{\BIBentryALTinterwordstretchfactor}{4}
\providecommand{\BIBentryALTinterwordspacing}{\spaceskip=\fontdimen2\font plus
\BIBentryALTinterwordstretchfactor\fontdimen3\font minus
  \fontdimen4\font\relax}
\providecommand{\BIBforeignlanguage}[2]{{%
\expandafter\ifx\csname l@#1\endcsname\relax
\typeout{** WARNING: IEEEtran.bst: No hyphenation pattern has been}%
\typeout{** loaded for the language `#1'. Using the pattern for}%
\typeout{** the default language instead.}%
\else
\language=\csname l@#1\endcsname
\fi
#2}}
\providecommand{\BIBdecl}{\relax}
\BIBdecl

\bibitem{LeeNature1999}
D.~D. Lee and H.~S. Seung, ``Learning the parts of objects by non-negative
  matrix factorization,'' \emph{Nature}, vol. 401, no. 6755, pp. 788--791,
  1999.

\bibitem{BrunetPNAS2004}
J.-P. Brunet, P.~Tamayo, T.~R. Golub, and J.~P. Mesirov, ``Metagenes and
  molecular pattern discovery using matrix factorization,'' \emph{Proceedings
  of the National Academy of Sciences}, vol. 101, no.~12, pp. 4164--4169, 2004.

\bibitem{DingPAMI2010}
C.~Ding, T.~Li, and M.~I. Jordan, ``Convex and semi-nonnegative matrix
  factorizations,'' \emph{IEEE Transactions on Pattern Analysis and Machine
  Intelligence}, vol.~32, no.~1, pp. 45--55, Jan 2010.

\bibitem{YangWSDM2013}
J.~Yang and J.~Leskovec, ``Overlapping community detection at scale: A
  nonnegative matrix factorization approach,'' in \emph{Proceedings of the
  Sixth ACM International Conference on Web Search and Data Mining}, ser. WSDM
  '13.\hskip 1em plus 0.5em minus 0.4em\relax ACM, 2013, pp. 587--596.

\bibitem{MohammadihaTASLP2013}
N.~Mohammadiha, P.~Smaragdis, and A.~Leijon, ``Supervised and unsupervised
  speech enhancement using nonnegative matrix factorization,'' \emph{IEEE
  Transactions on Audio, Speech, and Language Processing}, vol.~21, no.~10, pp.
  2140--2151, Oct 2013.

\bibitem{KorenC2009}
Y.~Koren, R.~Bell, and C.~Volinsky, ``Matrix factorization techniques for
  recommender systems,'' \emph{Computer}, vol.~42, no.~8, pp. 30--37, Aug 2009.

\bibitem{EsserTIP2012}
E.~Esser, M.~Moller, S.~Osher, G.~Sapiro, and J.~Xin, ``A convex model for
  nonnegative matrix factorization and dimensionality reduction on physical
  space,'' \emph{IEEE Transactions on Image Processing}, vol.~21, no.~7, pp.
  3239--3252, July 2012.

\bibitem{LiuSDM2013}
J.~Liu, C.~Wang, J.~Gao, and J.~Han, ``Multi-view clustering via joint
  nonnegative matrix factorization,'' in \emph{Proceedings of the 2013 SIAM
  International Conference on Data Mining}, vol.~13, May 2013, pp. 252--260.

\bibitem{KimSIGKDD2015}
H.~Kim, J.~Choo, J.~Kim, C.~K. Reddy, and H.~Park, ``Simultaneous discovery of
  common and discriminative topics via joint nonnegative matrix
  factorization,'' in \emph{Proceedings of the 21th ACM SIGKDD International
  Conference on Knowledge Discovery and Data Mining}, ser. KDD '15.\hskip 1em
  plus 0.5em minus 0.4em\relax ACM, 2015, pp. 567--576.

\bibitem{LeeNIPS2001}
D.~D. Lee and H.~S. Seung, ``Algorithms for non-negative matrix
  factorization,'' in \emph{Advances in Neural Information Processing Systems
  13}.\hskip 1em plus 0.5em minus 0.4em\relax MIT Press, 2001, pp. 556--562.

\bibitem{Gonzalez2005DCAM}
E.~F. Gonzalez and Y.~Zhang, ``Accelerating the lee-seung algorithm for
  non-negative matrix factorization,'' \emph{Dept. Comput. \& Appl. Math., Rice
  Univ., Houston, TX, Tech. Rep. TR-05-02}, 2005.

\bibitem{LinNC2007}
C.-J. Lin, ``Projected gradient methods for nonnegative matrix factorization,''
  \emph{Neural computation}, vol.~19, no.~10, pp. 2756--2779, 2007.

\bibitem{HoyerJMLR2004}
P.~O. Hoyer, ``Non-negative matrix factorization with sparseness constraints,''
  \emph{Journal of machine learning research}, vol.~5, pp. 1457--1469, Nov
  2004.

\bibitem{DingSDM2005}
C.~Ding, X.~He, and H.~D. Simon, ``On the equivalence of nonnegative matrix
  factorization and spectral clustering,'' in \emph{Proceedings of the 2005
  SIAM International Conference on Data Mining}, vol.~5, 2005, pp. 606--610.

\bibitem{PascualPAMI2006}
A.~Pascual-Montano, J.~M. Carazo, K.~Kochi, D.~Lehmann, and R.~D.
  Pascual-Marqui, ``Nonsmooth nonnegative matrix factorization (nsnmf),''
  \emph{IEEE Transactions on Pattern Analysis and Machine Intelligence},
  vol.~28, no.~3, pp. 403--415, March 2006.

\bibitem{OzerovTASLP2010}
A.~Ozerov and C.~Fevotte, ``Multichannel nonnegative matrix factorization in
  convolutive mixtures for audio source separation,'' \emph{IEEE Transactions
  on Audio, Speech, and Language Processing}, vol.~18, no.~3, pp. 550--563,
  March 2010.

\bibitem{SandlerPAMI2011}
R.~Sandler and M.~Lindenbaum, ``Nonnegative matrix factorization with earth
  mover's distance metric for image analysis,'' \emph{IEEE Transactions on
  Pattern Analysis and Machine Intelligence}, vol.~33, no.~8, pp. 1590--1602,
  Aug 2011.

\bibitem{KimuraML2016}
K.~Kimura, M.~Kudo, and Y.~Tanaka, ``A column-wise update algorithm for
  nonnegative matrix factorization in bregman divergence with an orthogonal
  constraint,'' \emph{Machine Learning}, vol. 103, no.~2, pp. 285--306, 2016.

\bibitem{PaucaLAA2006}
V.~P. Pauca, J.~Piper, and R.~J. Plemmons, ``Nonnegative matrix factorization
  for spectral data analysis,'' \emph{Linear Algebra and its Applications},
  vol. 416, no.~1, pp. 29--47, 2006.

\bibitem{CaiPAMI2011}
D.~Cai, X.~He, J.~Han, and T.~S. Huang, ``Graph regularized nonnegative matrix
  factorization for data representation,'' \emph{IEEE Transactions on Pattern
  Analysis and Machine Intelligence}, vol.~33, no.~8, pp. 1548--1560, Aug 2011.

\bibitem{LiuPAMI2012}
H.~Liu, Z.~Wu, X.~Li, D.~Cai, and T.~S. Huang, ``Constrained nonnegative matrix
  factorization for image representation,'' \emph{IEEE Transactions on Pattern
  Analysis and Machine Intelligence}, vol.~34, no.~7, pp. 1299--1311, July
  2012.

\bibitem{JiaGRS2009}
S.~Jia and Y.~Qian, ``Constrained nonnegative matrix factorization for
  hyperspectral unmixing,'' \emph{IEEE Transactions on Geoscience and Remote
  Sensing}, vol.~47, no.~1, pp. 161--173, Jan 2009.

\bibitem{LiCVPR2001}
S.~Z. Li, X.~Hou, H.~Zhang, and Q.~Cheng, ``Learning spatially localized,
  parts-based representation,'' in \emph{Computer Vision and Pattern
  Recognition, 2001. CVPR 2001. Proceedings of the 2001 IEEE Computer Society
  Conference on}, vol.~1, Dec 2001, pp. I--207--I--212.

\bibitem{ShenAAAI2010}
B.~Shen and L.~Si, ``Non-negative matrix factorization clustering on multiple
  manifolds,'' in \emph{Proceedings of the 24th AAAI Conference on Artificial
  Intelligence}, ser. AAAI-10, 2010, pp. 575--580.

\bibitem{MunkresSIAM1957}
J.~Munkres, ``Algorithms for the assignment and transportation problems,''
  \emph{Journal of the society for industrial and applied mathematics}, vol.~5,
  no.~1, pp. 32--38, 1957.

\bibitem{XuSIGIR2003}
W.~Xu, X.~Liu, and Y.~Gong, ``Document clustering based on non-negative matrix
  factorization,'' in \emph{Proceedings of the 26th Annual International ACM
  SIGIR Conference on Research and Development in Informaion Retrieval}, ser.
  SIGIR '03.\hskip 1em plus 0.5em minus 0.4em\relax ACM, 2003, pp. 267--273.

\bibitem{BarnstonWF1992}
A.~G. Barnston, ``Correspondence among the correlation, rmse, and heidke
  forecast verification measures; refinement of the heidke score,''
  \emph{Weather and Forecasting}, vol.~7, no.~4, pp. 699--709, 1992.

\bibitem{PowersBP2011}
D.~M. Powers, ``Evaluation: from precision, recall and f-measure to roc,
  informedness, markedness and correlation,'' 2011.

\bibitem{SamariaIEEE1994}
F.~S. Samaria and A.~C. Harter, ``Parameterisation of a stochastic model for
  human face identification,'' in \emph{Applications of Computer Vision, 1994.,
  Proceedings of the Second IEEE Workshop on}.\hskip 1em plus 0.5em minus
  0.4em\relax IEEE, 1994, pp. 138--142.

\bibitem{GrossIVC2010}
R.~Gross, I.~Matthews, J.~Cohn, T.~Kanade, and S.~Baker, ``Multi-pie,''
  \emph{Image and Vision Computing}, vol.~28, no.~5, pp. 807--813, 2010.

\bibitem{ZhangSDM2006}
S.~Zhang, W.~Wang, J.~Ford, and F.~Makedon, ``Learning from incomplete ratings
  using non-negative matrix factorization,'' in \emph{Proceedings of the 2006
  SIAM International Conference on Data Mining}, vol.~6, April 2006, pp.
  549--553.

\bibitem{HarperTIIS2015}
F.~M. Harper and J.~A. Konstan, ``The movielens datasets: History and
  context,'' \emph{ACM Trans. Interact. Intell. Syst.}, vol.~5, no.~4, pp.
  19:1--19:19, Dec 2015.

\end{thebibliography}
	
	% biography section
	% 
	% If you have an EPS/PDF photo (graphicx package needed) extra braces are
	% needed around the contents of the optional argument to biography to prevent
	% the LaTeX parser from getting confused when it sees the complicated
	% \includegraphics command within an optional argument. (You could create
	% your own custom macro containing the \includegraphics command to make things
	% simpler here.)
	%\begin{IEEEbiography}[{\includegraphics[width=1in,height=1.25in,clip,keepaspectratio]{mshell}}]{Michael Shell}
	% or if you just want to reserve a space for a photo:
	
	\begin{IEEEbiography}[{\includegraphics[width=1in,height=1.25in,clip,keepaspectratio]{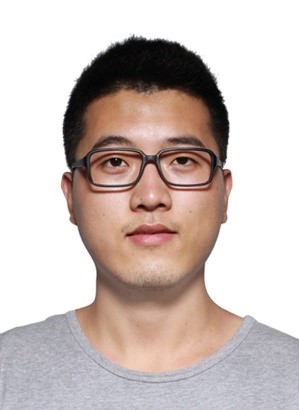}}]{Shuai Jiang}
		received the bachelor’s degree in computer science and technology from Beijing Institute of Technology, Beijing, China, in 2013. Currently he is working towards the dual doctoral degree in both Beijing Institute of Technology and University of Technology Sydney. His main interests include machine learning, optimisation and data analytics.
	\end{IEEEbiography}
	
	% if you will not have a photo at all:
	\begin{IEEEbiography}[{\includegraphics[width=1in,height=1.25in,clip,keepaspectratio]{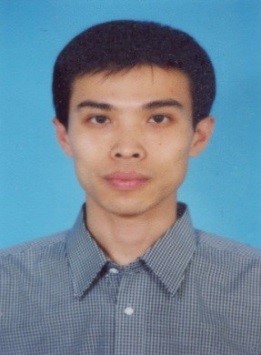}}]{Kan Li}
		is currently a Professor in the School of Computer at Beijing Institute of Technology. He has published over 50 technical papers in peer-reviewed journals and conference proceedings. His research interests include machine learning and pattern recognition.
	\end{IEEEbiography}
	
	% insert where needed to balance the two columns on the last page with
	% biographies
	%\newpage
	
	\begin{IEEEbiography}[{\includegraphics[width=1in,height=1.25in,clip,keepaspectratio]{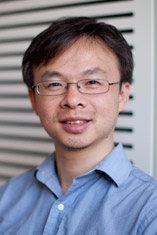}}]{Richard Yida Xu}
		received the B.Eng. degree in computer engineering from the University of New South Wales, Sydney, NSW, Australia, in 2001, and the Ph.D. degree in computer sciences from the University of Technology at Sydney (UTS), Sydney, NSW, Australia, in 2006. He is currently an Associate Professor of School of Electrical and Data Engineering, UTS. His current research interests include machine learning, deep learning, data analytics and computer vision.
	\end{IEEEbiography}
	
	% You can push biographies down or up by placing
	% a \vfill before or after them. The appropriate
	% use of \vfill depends on what kind of text is
	% on the last page and whether or not the columns
	% are being equalized.
	
	%\vfill
	
	% Can be used to pull up biographies so that the bottom of the last one
	% is flush with the other column.
	%\enlargethispage{-5in}

	% that's all folks
\end{document}